  \newcommand{\co}[1]{}
\newtheorem{theorem}{Theorem}
\title{Nystr$\ddot{\textrm{o}}$m Approximation with Nonnegative Matrix Factorization}
\author{
Yongquan Fu\\
yongquanf@nudt.edu.cn\\
National University of Defense Technology
%
}
\begin{document}

\maketitle

\begin{abstract}
Motivated by the needs of estimating the proximity clustering with partial distance measurements from vantage points or landmarks for remote networked systems, we show that the proximity clustering problem can be effectively formulated as the Nystr$\ddot{\textrm{o}}$m approximation problem, which  solves the kernel K-means clustering problem in the complex space. We implement the Nystr$\ddot{\textrm{o}}$m approximation based on a landmark based Nonnegative Matrix Factorization (NMF) process. Evaluation results show that the proposed method finds nearly optimal clustering quality  on both synthetic and real-world data sets  as we vary the range of parameter choices and network conditions.
\end{abstract}

\section{Introduction}

This paper is motivated to determine the clustering structure of global DNS servers without direct control. The Domain Name System (DNS) system \cite{DNS11} is one of the most important components of the Internet infrastructure, which converts a domain or host name to one or multiple IP addresses on the Internet so that clients can establish the network connection towards the service provider. It is estimated that there are tens of thousands of public and open DNS servers on the Internet, yet we still have little knowledge  on the locations or the distance between DNS servers, as  we cannot directly collect network distances between DNS servers. Suppose that we know the clustering structure of  DNS servers, we can have a powerful unsupervised learning model for diverse situation-awareness tasks.   

Clustering of networked systems based on network distances  provides a compact summary representation of the global situation awareness.  Traditionally, clustering analysis partitions a set of data items to a number of clusters with the features in the data items. Each data item is typically represented with a vector in a coordinate space, where the vector distance of two data items determines the possibility of clustering them to the same cluster.  It is challenging to identify the distance based clustering structures  in real time, since the servers do not allow for installing any third-party measurement software. Thus we can only observe the latency towards the DNS servers. As a result, the network distance matrix between  internal servers are unknown to external observers. Moreover, measuring the network distance is also costly due to the synchronization of measurements. Due to the hardness of the problem,  prior researchers typically choose clustering heuristics without theoretical guarantees \cite{DBLP:journals/pami/RothLKB03,DBLP:conf/icdcs/SuCBK08,DBLP:journals/sigmetrics/ChenLKO02,DBLP:journals/ccr/SharmaXBL06}. Recently, Wang et al. \cite{DBLP:journals/corr/WangGM17a} provide improved approximation bounds on the symmetric positive semi-definite (SPSD) matrix for the kernel K-means clustering problem based on the Nystr$\ddot{\textrm{o}}$m approximation over the feature space. Given a SPSD matrix $\mathrm{K} \in R^{n \times n}$  and  a sketching matrix $\mathrm{P} \in R^{n \times r}$, the Nystr$\ddot{\textrm{o}}$m method derives $\mathrm{C} = \mathrm{K}  \mathrm{P}$ and $\mathrm{D} = \mathrm{P}^T \mathrm{K}  P$, and then approximates $\mathrm{K}$ with $\mathrm{C} \mathrm{D} ^{\dagger} \mathrm{C}^T$, where  $\mathrm{D} ^{\dagger}$ represents $\mathrm{D}$'s  Moore-Penrose inverse.  It is natural to ask whether we could relax the SPSD property on generalized distance matrices?

We formulate a Nystr$\ddot{\textrm{o}}$m approximation framework for generalized complex-space kernel matrix that arises from the network distance matrix.  For the proximity clustering on the networked system, we collect a small number of measurements between vantage-point servers and target servers, and try to find the proximity clustering structure for DNS servers based on  partially observed measurements.   

 We show that the Nystr$\ddot{\textrm{o}}$m approximation with NMF 
based on the symmetric distance matrix, is equivalent to the kernel
K-means clustering on a complex-space kernel matrix. The clustering interpretation based the NMF has been extensively studied \cite{DBLP:conf/sdm/DingH05,DBLP:conf/kdd/DingLPP06}.   Proving the clustering interpretations for generalized complex-space kernel matrix is still an open problem. We show that  one NMF factor matrix has identical clustering results with the optimal kernel K-means clustering
indicator matrix; and the other factor matrix reflects the clustering validity of the whole set of nodes.  

We test the clustering quality with  synthetic and real-world network distance matrices. Experimental results show that our approach is efficient and can find stably accurate clustering structure, with variable-sized data sets that may contain missing items, and adapts to topology changes and link dynamics.


\section{Problem Formulation}

To efficiently test the network distance based clustering, a concise network distance model that precisely preserves the inter-node distances is required, such that the optimal clustering
structure obtained from the network distance  model, is identical
to the hidden clustering structure of the original pairwise distance
matrix. 

Generally, the pairwise network distances between $N$ network hosts called nodes, can be abstractly represented as a $N$-by-$N$ matrix $W$.  We assume the network distance matrix to be symmetric, since the clustering input needs to be a metric. For example, the Round Trip Time (RTT) satisfies this  assumption. For other metric such as hops, we may consider the sum of the metrics in the  forward path and the reverse path for a node pair.  
 
 Let $W(x,y)$ denote the pairwise distance from node $x$ to node $y$.  Given $N$ nodes, let the distance mapping in $S$ as $W: {N  \times N}  \to {R}^{+}$, which satisfies: (a) $W\left( {x,x} \right) = 0$ ; (b) $W\left( {x,y} \right) = W\left( {y,x} \right)$, $\forall x,y \in N$. 
 
The SVD of the distance matrix $W$ can be represented as $W = U Q V^T$, where $U \in R^{N \times r}$ and $V \in R^{N \times r}$ are orthogonal matrices, and $Q = U^T W V \in R^{N \times N}$ is diagonal with $Q = {diag} (\delta _1, \delta _2, \dots, \delta _r)$ with the non-negative numbers  $\delta _1 \ge \delta _2  \ge \dots  \ge\delta _r  \ge 0$ being the singular values of $W$.  The dimensionality $r$ of the network distance matrix is known to be approximately low, since the wide-area routing paths usually share some path segments, yielding correlations among different node pairs \cite{Abrahao2008IDS14525201452541,tonLeeZSS10}.

Further, let ${X^{h}}$ denote the complex conjugate operator of a hermitian matrix $X$, let the eigenvalue decomposition of $W$ be  $W = Z D Z^{h}$, where $Z$ is an orthogonal matrix, and the columns of $V$ are eigenvectors for $W$ and $D$ is a diagonal matrix with diagonal entries $\lambda _1, \lambda _2, \dots, \lambda _r$ being the eigenvalues of $W$.  Due to the symmetry of the $W$, we see that the SVD of the distance matrix $W$ is equivalent to the eigenvalue decomposition of $W$, where the singular values serve as the magnitudes of the eigenvalues $U = V$ and $\delta _i = \| \lambda _i \|$. 

We define a projected operator for $W$ based on the eigenvalue decomposition as:  $\phi$: $\mathbf{x} = \mathbf{x} D^{\frac{1}{2}}$, where $D^{\frac{1}{2}}$ is a diagonal hermitian matrix, since some entries in $D^{\frac{1}{2}}$ may be complex numbers due to the negativity of some eigenvalues. Next, we can see that the distance matrix $W$ can be represented with the projected operators:
\begin{equation}
W=\phi\left(Z\right)\phi\left(Z\right)^{h}
\end{equation}

Since $\phi\left(Z\right)\phi\left(Z\right)^{h}$ completely preserves
the inter-node distances, we adopt the network distance  model as
$W=\phi\left(Z\right)\phi\left(Z\right)^{h}$. Now $\phi\left(Z\right)$
can be regarded as generalized vectors of nodes.

Based on the kernel matrix representation, we select the K-means clustering as the compact clustering objective, which is one of the most popular clustering methods with rich theoretical extensions \cite{DBLP:conf/sdm/DingH05,DBLP:conf/kdd/DingLPP06,DBLP:journals/tkde/WuLXCC15,DBLP:journals/tkde/WangWSXSL15,DBLP:journals/tkde/LiuWLTF17}. If a clear-separation
clustering structure is identified, then the clustering result is close to the optimal K-means clustering \cite{DBLP:conf/icml/Meila06}.

Let the complex-number vectors be   represented
as $\{{\mathbf{x}}_{1},\ldots,{\mathbf{x}}_{n}\}$, and data items are to be divided
into $K$ groups, denoted as $C_{1},\ldots,C_{K}$. Let $W$ be the symmetric pairwise distance matrix
between $n$ nodes.  Let $K $ denote the number of clusters, $ N_i $  the number of items in the $i$-th cluster, $c_c $   the set of items in the $i$-th cluster, $H \in {\{ 0,1\} ^{N \times K}} $   the clustering indicator matrix for the items,  and $H(i,j) = 1$ the indicator that  the item $i$ is in the $j$-th cluster, and  ${{{\tilde c}}_k} = \frac{1}{{{N_k}}}\sum\nolimits_{p \in {C_k}} {{{\mathbf{x}}_p}} $ the centroid of the $K $-th cluster

Our goal is to design clustering methods   with solid foundations and monitor global clustering validity, i.e., whether there are significant separations between different clusters.  The kernel K-means clustering generalizes the K-means clustering,  by mapping data items into a high dimensional feature space, i.e.,
$x_{i}\rightarrow\phi(x_{i})$. Kernel K-means clustering can efficiently
find the clustering structure with nonlinear separations in the space
\cite{DBLP:conf/sdm/DingH05}. It defines a kernel function $\phi\left( {{\mathbf{x}}_i} \right)$ on each item' coordinate ${{\mathbf{x}}_i}$, which yields the kernel version of the optimization objective:
\begin{equation}\label{KernelMeans}
  {\mathrm{min}}{\mkern 1mu} {J}\left( {H,{{\tilde \mu }_k}} \right) = \sum\limits_{k = 1}^K  {\sum\limits_{i = 1}^N {{H_{ik}}{{\left\| {\phi \left( {{{{\mathbf{x}}_i}}} \right) - {{\tilde \mu }_k}} \right\|}^2}} }
\end{equation}
where ${{\tilde \mu }_k} = \frac{1}{{{N_k}}}\sum\nolimits_{i \in {C_k}} {\phi \left( {{{{\mathbf{x}}_i}}} \right)} $ denotes the centroid of the $K$-th cluster. The kernel K-means clustering method is more suitable for identifying non-linear structures and especially works well for real-life data sets containing noises.

\section{Nystr$\ddot{\textrm{o}}$m Approximation}

We present a Nystr$\ddot{\textrm{o}}$m approximation method HSH based on the clustering interpretations of the NMF.

First, we randomly select a subset of nodes as landmark nodes and let each landmark node independently measure the distances towards the other landmarks, and send the probed distance vector to a centralized master.  Specifically, we obtain the pairwise RTT matrix between landmarks, and the RTT from landmarks to target servers (non-landmark for short). The probed results are two parts:  (i) $W_{L\times L}$ be the distance matrix of the landmarks; (ii) let $W_{D\times L}$ be the distance matrix from non-landmarks to landmark nodes. We can see that the RTT values between target servers are unobservable.

Second, the master computes the  NMF results based on the collected pairwise distance matrix between landmark nodes. 
We seek to optimize
 \begin{equation}\label{EqRelaxed}
\underset{{H}\geq0,S\geq0}{min}\Vert W-{H}S{H}^{T}\Vert^{2}
\end{equation} 
with multiplicative updating rules as follows to minimize Eq.~\eqref{EqRelaxed}:
\[
{H}_{jk}\leftarrow{H}_{jk}\sqrt{\frac{\left(W_{L\times L}{H}S\right)_{jk}}{\left({H}{H}^{T}W_{L\times L}{H}S\right)_{jk}}}, S_{kl}\leftarrow S_{kl}\sqrt{\frac{\left({H}^{T}W_{L\times L}{H}\right)_{kl}}{\left({H}^{T}{H}S{H}^{T}{H}\right)_{kl}}}.
\]
for landmark $j$ and each dimension $k \in \left[ 1, K \right] $and for each row $k \in \left[ 1, K\right] $ and each column $l \in \left[ 1, K\right] $. 

Generally,  the diagonal elements of the matrix
$S$ refer to sums of intra-cluster distances, while the off-diagonal
elements $\left(i,j\right)$ of the matrix \emph{S} correspond to sums of distances between nodes in the $i$th and the $j$th
clusters, for $i\neq j$. Therefore, if there is a distinct gap between
the diagonal element of the $i$th row vector and the off-diagonal
element $\left(i,j\right)$ of matrix $S$, for $j\neq i$, the separation between the $i$th cluster and the $j$th cluster is apparent; otherwise, these two clusters are likely to overlap each other.

Third, afterwards, the master computes the NMF results for target servers based on factor matrices of landmarks as well as the network distances from the landmarks to target servers. Specifically, the master  optimizes a least square unconstrained optimization
problem: 
\begin{equation}
{H}_{i}=\underset{P}{min}\Vert W_{iL}-P\cdot S\cdot{H}_{L}^{T}\Vert^{2}
\label{eq:leastSquare}\end{equation}
Which has a closed-form global optimal value as:
 \[
{H}_{i}=W_{iL}S{H}_{L}^{T}\left(\left(S{H}_{L}^{T}\right)^{T}\left(S{H}_{L}^{T}\right)\right)^{-1}
\].

\subsection{Approximation Analysis}

The above process can be summarized as follows:
\[
\underset{{H}_{L}\geq0,S\geq0}{min}\underset{\textrm{Stage 1}}{\underbrace{\Vert W_{L\times L}-{H}_{L}S{H}_{L}^{T}\Vert^{2}}}+\underset{\textrm{Stage 2}}{\underbrace{\Vert W_{D\times L}-{H}_{D}S{H}_{L}^{T}\Vert^{2}}}, 
\]

which leads to the following matrix
approximation goal:

\begin{equation}
perm\left(W\right)  =\left[\begin{array}{cc}
W_{L\times L} & W_{D\times L}^{T}\\
W_{D\times L} & W_{H\times H}\end{array}\right] 
\approx \left[\begin{array}{cc}
{H}_{L}S{H}_{L}^{T} & {H}_{L}S{H}_{D}^{T}\\
{H}_{D}S{H}_{L}^{T} & {H}_{D}S{H}_{D}^{T}\end{array}\right]=\hat{H}S\hat{H}^{T}
\end{equation}

where $perm\left(W\right)$ represents the reordered matrix of $W$
according to the index sequence of nodes (landmarks, non-landmarks),
and $\hat{H}$ is the transpose of $\left[\begin{array}{cc}
{H}_{L} & {H}_{D}\end{array}\right]$. Then, the approximation error of target servers (non-landmarks) can be described as $\Vert W_{H\times H}-{H}_{D}S{H}_{D}^{T} \Vert^{2}$.

\subsection{Clustering Analysis}

We next show that with the complex vector representation  $\phi\left(Z\right)$, the kernel K-means clustering based on the network distance  model $W=\phi\left(Z\right)\phi\left(Z\right)^{h}$,
is equivalent to the NMF on the global distance matrix.

\begin{theorem}
\label{thm:1} 
Let $W=\phi\left(Z\right)\phi\left(Z\right)^{h}$, the kernel K-means clustering
objective on $Q$: \begin{equation}
J_{K}=\underset{H}{min}\Sigma_{k=1}^{K}\Sigma_{i=1}^{n}H_{ik}\Vert\phi\left(Z_{i}\right)-\overline{c_{k}}\Vert^{2}\label{eq:Kernel K}\end{equation}
is equivalent to the NMF (\ref{eq:HSH}) defined as:
\begin{equation}
\underset{{H}\geq0,S\geq0}{min}\Vert W-{H}S{H}^{T}\Vert^{2},s.t.,{H}^{T}{H}=I,S\:\textrm{is diagonal},\label{eq:HSH}
\end{equation}
where ${H}=\left(h_{1},\ldots,h_{K}\right)\in R_{+}^{n\times K}$,
$S\in R_{+}^{K\times K}$, $R_{+}$ represents the set of nonnegative matrices.   
\end{theorem}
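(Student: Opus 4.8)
The plan is to reduce both optimization problems to the same trace-maximization over cluster indicator matrices, and then argue that their minimizers coincide.

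First I would expand the kernel K-means objective $J_K$ by writing $\overline{c_k}=\frac{1}{N_k}\sum_{i\in C_k}\phi(Z_i)$ and using the Hermitian inner product $W_{ij}=\phi(Z_i)\phi(Z_j)^h$. The Pythagorean identity $\sum_{i\in C_k}\|\phi(Z_i)-\overline{c_k}\|^2=\sum_{i\in C_k}\|\phi(Z_i)\|^2-N_k\|\overline{c_k}\|^2$ collapses the cross term, and since $W_{ii}=0$ the first sum vanishes. Hence minimizing $J_K$ is equivalent to maximizing $\sum_k N_k\|\overline{c_k}\|^2=\sum_k\frac{1}{N_k}\sum_{i,j\in C_k}W_{ij}$. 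Introducing the column-scaled indicator $\tilde H$ with $\tilde H_{ik}=H_{ik}/\sqrt{N_k}$, which satisfies $\tilde H^T\tilde H=I$ and $\tilde H\geq0$, this objective equals $\mathrm{Tr}(\tilde H^T W\tilde H)$, so the K-means problem becomes $\max_{\tilde H\geq0,\ \tilde H^T\tilde H=I}\mathrm{Tr}(\tilde H^T W\tilde H)$.

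Second I would expand the NMF objective as $\|W-HSH^T\|^2=\|W\|^2-2\,\mathrm{Tr}(SH^TWH)+\mathrm{Tr}(S^2)$, where $H^TH=I$ and $S$ diagonal make the last term collapse to $\sum_k S_{kk}^2$. Minimizing over the diagonal $S\geq0$ for fixed $H$ yields the closed form $S_{kk}=(H^TWH)_{kk}$, i.e. $S$ records exactly the within-cluster similarity sums $N_k\|\overline{c_k}\|^2$ that appeared in Step~1 --- the formal counterpart of the earlier remark that the diagonal of $S$ measures intra-cluster cohesion. Third, I would close the equivalence by invoking the standard structural fact that nonnegativity together with $H^TH=I$ forces the columns of $H$ to have disjoint supports, so every optimal $H$ is a column scaling of a $\{0,1\}$ partition matrix; this identifies the relaxed NMF optimizer with a hard K-means assignment and the two feasible sets with one another.

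The main obstacle lies in two places. The geometric one is that $\phi(Z)$ lives in an indefinite complex inner-product space, where $\|\phi(Z_i)\|^2$ may be complex and $W_{ii}=0$ for a nonzero vector; I must justify that the Step~1 expansions stay real-valued, which follows from $W=W^h$ being real symmetric so that each quantity $\mathrm{Tr}(\tilde H^T W\tilde H)$ and $(H^TWH)_{kk}$ is real. The genuinely load-bearing step is the reconciliation in Steps~2--3: the diagonal-$S$ NMF maximizes $\sum_k S_{kk}^2$ whereas K-means maximizes $\sum_k S_{kk}$, so coincidence of the minimizers is not automatic. I would discharge this using the disjoint-support structure of orthonormal nonnegative $H$ together with the separation-gap assumption discussed earlier (distinct diagonal versus off-diagonal blocks of $S$), under which the partition maximizing the sum of within-cluster cohesions also maximizes the sum of their squares; making this last implication precise is where the real work of the proof concentrates.
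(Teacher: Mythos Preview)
Your Step~1 reduction of kernel K-means to $\max_{\tilde H\ge 0,\ \tilde H^T\tilde H=I}\mathrm{Tr}(\tilde H^TW\tilde H)$ is essentially what the paper does (the paper cites the Ding--He trace identity rather than deriving it). The divergence begins at your Step~2, and this is where your proposal runs into trouble.

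You treat $S$ as a \emph{free} nonnegative diagonal variable and eliminate it by first-order optimality, obtaining $S_{kk}=(H^TWH)_{kk}$ and hence an objective $\|W\|^2-\sum_k(H^TWH)_{kk}^2$. This is exactly what creates the $\sum_k S_{kk}$ versus $\sum_k S_{kk}^2$ mismatch you flag. Your proposed fix via a ``separation-gap assumption'' is not a proof: monotone transformations do not in general preserve argmax over partitions, and you give no mechanism that would force the two maximizers to coincide.

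The paper avoids this difficulty by taking a different route: it never optimizes $S$ out. Instead it \emph{defines} $S=\mathrm{diag}(n_1,\ldots,n_K)=H^TH$ from the hard indicator $H$, sets $\tilde H=H S^{-1/2}$ so that $\tilde H^T\tilde H=I$ and $\tilde H S\tilde H^T=HH^T$, and then completes the square:
\[
\|W-\tilde H S\tilde H^T\|^2=\|W\|^2-2\,\mathrm{Tr}\bigl((\tilde H S^{1/2})^TW(\tilde H S^{1/2})\bigr)+\|\tilde H S\tilde H^T\|^2,
\]
with $\|\tilde H S\tilde H^T\|^2=\mathrm{Tr}(S^2)=\sum_k n_k^2$ declared a constant. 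Because $S$ is tied to $H$ throughout, the NMF side and the K-means side are compared over the \emph{same} variable, and the $\ell_1$/$\ell_2$ discrepancy never arises. (One can still object that $\sum_k n_k^2$ is not literally constant across partitions --- the paper simply asserts it is --- but that is the paper's argument, and it is the step you are missing.) If you want your proposal to go through, you should abandon the free-$S$ elimination and instead couple $S$ to the cluster sizes as the paper does.
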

\begin{proof}
 Let the clustering indicator matrix $H=\{0,1\}^{n\times K}$, based on \cite{DBLP:conf/sdm/DingH05}, the kernel K-means clustering objective is equivalent to 

\begin{equation}
J_{K}=\min  tr\left(\phi\left(x\right)\phi\left(x\right)^{h}\right)-tr\left(H^{T}\phi\left(x\right)\phi\left(x\right)^{h}H\right)\label{eq:J}\end{equation}

The first item of (\ref{eq:J}) is constant, with the symmetric
network distance  model $W=\phi\left(Z\right)\phi\left(Z\right)^{h}$,
the optimization of (\ref{eq:J}) is equivalent to

\begin{equation}
J_{W}=\underset{H,H\geq0}{max}tr\left(H^{T}WH\right)\label{eq:J_d}\end{equation}

The choices of the items of $H$ are either 1 or 0. Since it is
hard to complete an integer optimization problem, we relax the integer
constraint of the matrix $H$: because each node belongs to only one
cluster, there is only one nonzero item in each row vector of $H$,
which can be described as: (1) $\left(H^{T}H\right)_{ij}=0,i\neq j$;
(2) $\left(H^{T}H\right)_{ii}=|C_{i}|=n_{i}$, which is the number
of nodes in the $i$th cluster.

Let $S=diag\left(H^{T}H\right)=diag\left(n_{1},\ldots,n_{K}\right)\in R^{K\times K}$,
thus $H^{T}H=S$. The objective of (\ref{eq:J_d}) is equivalent to

\begin{equation}
J_{S}=\underset{H^{T}H=S,H\geq0}{max}tr\left(H^{T}WH\right)\label{eq:J_m}\end{equation}

Now the choices of the items of $H$ are mapped to a continuous range.
However, note that $S$ is an unknown matrix, due to the fact that
the clustering information is unknowable in advance. To eliminate
$S$\emph{ }in the restraints of (\ref{eq:J_m}), let $\tilde{H}=H\left(H^{T}H\right)^{-\frac{1}{2}}$,then
\[
\tilde{H}^{h}\tilde{H}=H\left(H^{T}H\right)^{-1}H=I
\]
and
\[
\tilde{H}S\tilde{H}^{h}=H\left(H^{T}H\right)^{-\frac{1}{2}}\left(H^{T}H\right)\left(H^{T}H\right)^{-\frac{1}{2}}H^{T}=HH^{T}.\]
Now the optimal clustering index of each node is equal to
the column number of the nonzero element of each row vector of ${H}$. 

The optimization of (\ref{eq:J_m}) is equivalent to 
\begin{equation}
\underset{\tilde{H}^{h}\tilde{H}=I,\tilde{H},S\geq0}{max}tr\left(\left(\tilde{H}S^{\frac{1}{2}}\right)^{h}W\left(\tilde{H}S^{\frac{1}{2}}\right)\right),S\textrm{ is diagonal}\label{eq:J_max}
\end{equation}
and
 \begin{equation}
\Vert\tilde{H}S\tilde{H}^{h}\Vert^{2}=\Vert HH^{T}\Vert^{2}=tr\left(HH^{T}HH^{T}\right)
=tr\left(H^{T}HH^{T}H\right)=tr\left(SS\right)=\Sigma_{i=1}^{K}n_{i}^{2},
\end{equation}
which is a constant, the optimal values of ${H}$ and $S$ in
(\ref{eq:J_max}) are solutions to the following objective:
\begin{equation}
\underset{\tilde{H}^{h}\tilde{H}=I,\tilde{H},S\geq0}{min}\Vert W\Vert^{2}-2tr\left(\left(\tilde{H}S^{\frac{1}{2}}\right)^{h}W\left(\tilde{H}S^{\frac{1}{2}}\right)\right)
+\Vert\tilde{H}S\tilde{H}^{h}\Vert^{2}
=\underset{\tilde{H}^{h}\tilde{H}=I,\tilde{H},S\geq0}{min}\Vert W-\tilde{H}S\tilde{H}^{h}\Vert^{2}
\end{equation}

Now keeping the orthogonal constraint of ${H}$ and the diagonal
constraint of $S$, the optimal matrix ${H}$ equivalently corresponds
to the clustering indicator matrix $H$ in the kernel K-means clustering
objective.
\end{proof}

The clustering interpretation of the NMF generalizes to the complex kernel matrix:  The number $K$ corresponds to the total number of clusters. The factor matrix ${H}$ represents a clustering indicator
matrix, which   indicates the index of the cluster for each node. The diagonal elements of the matrix $S$ represents the sizes of the corresponding
clusters; while the off-diagonal items of the matrix $S$ stand for the magnitude of inter-cluster distances. Therefore, there are large gaps between the diagonal items and the off-diagonal items that are in the same rows. Thus the matrix $S$ indicates the global clustering validity. 

Next, we show that the NMF on the network distance matrix over landmarks has a close connection with the coresets based K-means clustering methods \cite{DBLP:conf/compgeom/FeldmanMS07,DBLP:conf/soda/FeldmanSS13,DBLP:conf/stoc/Har-PeledM04,DBLP:journals/siamcomp/Chen09}.

\textbf{Definition}: Suppose that the landmarks satisfy the coreset property, such that let $Q$ be a set of data points and $\epsilon > 0$, let $cost (x,B) = \min _{y \in B} \| x,y \|$ for $B \in Q$,  let $cost(A, B) = \sum _{x, x \in A } cost (x,B)$, and the  set of landmarks  $Q_l \in Q$ are called an $\epsilon$-coreset, if for every set of cluster centers $C$, we have $(1- \epsilon) \cdot cost(Q, C) \le cost(Q_l, C) \le (1 +  \epsilon) \cdot cost(Q, C)$. 

Har{-}Peled and Mazumdar showed that  the optimal clustering result on the coreset  is also bounded by at most $(1 +  \epsilon)$ times the optimal clustering result on the whole set of data points \cite{DBLP:journals/corr/abs-1810-12826}.  As Theorem \ref{thm:1} shows the equivalence between the NMF and the kernel K-means clustering,  the optimal clustering by the NMF on the network distance matrix over the landmarks is bounded by $(1 +  \epsilon)$ times  the optimal clustering result by the NMF on the global network distance matrix over all data points. In other words, the optimal factor matrices of the landmarks serve as $(1 +  \epsilon)$-approximation for target servers. As a result, the optimal solutions of the proposed  HSH method also yields an $(1 +  \epsilon)$-approximation for the kernel K-means clustering objective on the complex-space kernel matrix.

\section{Evaluation}
\label{EvaSec}

In this section, we validate whether HSH can find real clustering
structures, and verify the clustering quality. 

\subsection{Evaluation Setup}

In the network distance  matrix, there are no ground-truth clustering results of the decentralized
nodes in advance, we evaluate the clustering quality based on two metrics: (i) Silhouette Coefficient: For each node $p_{i}$,
first, the averaged distance (denoted as $a_{i}$) between node $p_{i}$
and the nodes in the same cluster are computed; second, the averaged
distance (denoted as $b_{i}$) between $p_{i}$ and the nodes in different
clusters are computed, then the silhouette coefficient of $p_{i}$
is $\frac{b_{i}-a_{i}}{max(a_{i},b_{i})}$. The silhouette
coefficient varies between -1 and 1, if it approaches -1, the clustering
effectiveness of node $p_{i}$ is insignificant; otherwise, the clustering
effectiveness of node $p_{i}$ is much higher as silhouette coefficient
approaches one. (ii) Gain Ratio. It quantifies the averaged
ratios of the distance reductions by communicating with nodes in the same clusters. The gain ratio of any node $p_{i}$ is defined by the ratio of the mean inter-cluster distance $b_{i}$ to the mean intra-cluster
distance $a_{i}$ , where $b_{i}$ and $a_{i}$ are identical  those in the definition of the silhouette coefficient above. 

We choose both synthetic and real-world network distance data sets for studying the performance of the clustering process: (1)Synthetic. The data set is provided by the Matlab software which is originally used for testing K-means clusterings. In a 4-dimensional Euclidean space, 560 data items are generated, which consist of four clustering centroids and are classified into four groups. (2) Static Data Sets. (i) \textbf{DNS1143}, a symmetric RTT matrix between 1143 DNS servers by the MIT P2PSim project \cite{P2PSimData}  with the King method \cite{Gummadi2002_637203}. (ii) \textbf{DNS3997}, a symmetric delay matrix collected between 3997 DNS name servers by Zhang \textit{etal.}~\cite{DBLP:journals/ton/ZhangNNRDW10} using the King method.   (3) Dynamic Data Set. This data set was collected in summer 2014 for three hours between 99 wide-area servers and mobile nodes \cite{DBLP:conf/sigcse/CapposBKA09} . Each interval aggregates pairwise RTT samples within 15.7 seconds, which indicates short-term dynamics.

\subsection{Synthetic Data Set}

The dimensions of factor matrices can be uniquely determined as the clustering number. To validate whether HSH can find accurate clustering structures,
we use the Synthetic data set, since the ground-truth clustering are computed with K-means clustering algorithm in Matlab configured with 20 random repetitions (denoted as Origin). Then, we compute clustering results based on HSH and the centralized NMF (denoted as Centralized). For HSH, 25 nodes are selected as landmarks uniformly at random, and the dimension of the factor matrices for HSH and that of Centralized are both set to 4. 

Figure \ref{fig:validation} show clustering results. The clustering
quality of HSH is significantly better than the centralized NMF, and is approximately the same as that of the ground-truth clustering
results. Therefore, the two-phase matrix factorization process of HSH can efficiently preserve the optimal K-means clustering structure. 

The clustering quality of centralized NMF, is much lower than that of HSH. Since the matrix factorization process is easily caught in abundant poor-performance local minima, by directly  operating on the complete distance matrix. On the contrary, for
HSH, the size of the matrix factorization problem is reduced, by
selecting only a small subset of landmarks to carry out the multiplicative
update procedures, thus more efficient solutions can be found.

Besides, the running time of centralized NMF (938.18 s) is much longer than that of
HSH (only 0.39 s). In summary, HSH can efficiently find accurate clustering results when clustering structures have clear separations, as confirmed by the Synthetic data set.


%
\begin{figure}[!t]
\centering
 \subfigure[Silhouette coefficient]{%
    \label{fig:ex3-a}%
    \epsfig{file=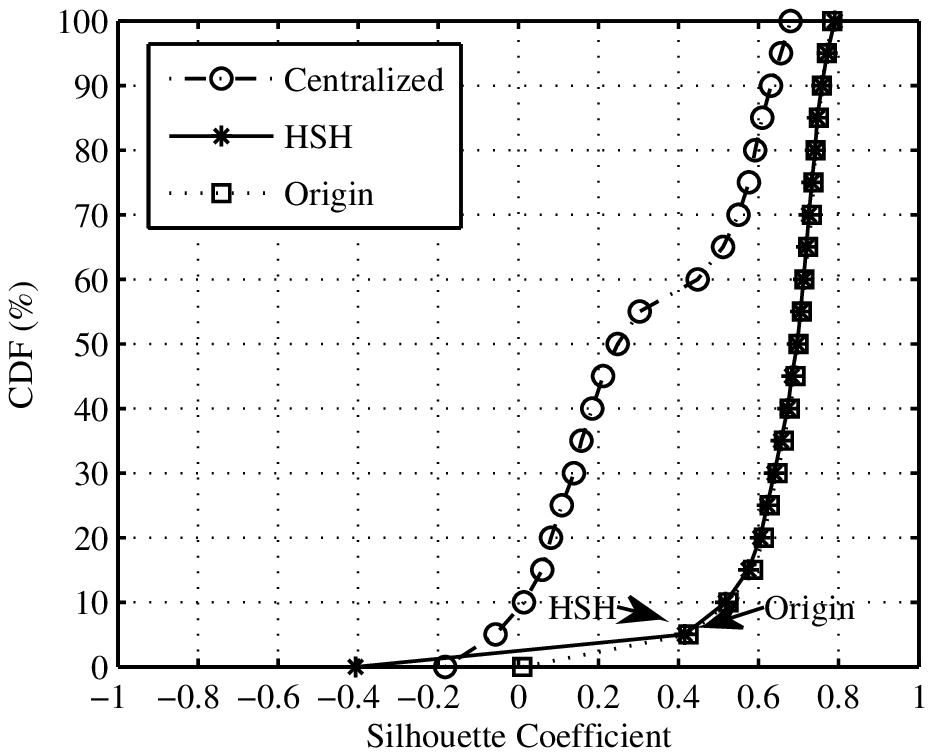,width=0.3\hsize}}
  \subfigure[Gain ratio]{%
    \label{fig:ex3-b}%
    \epsfig{file=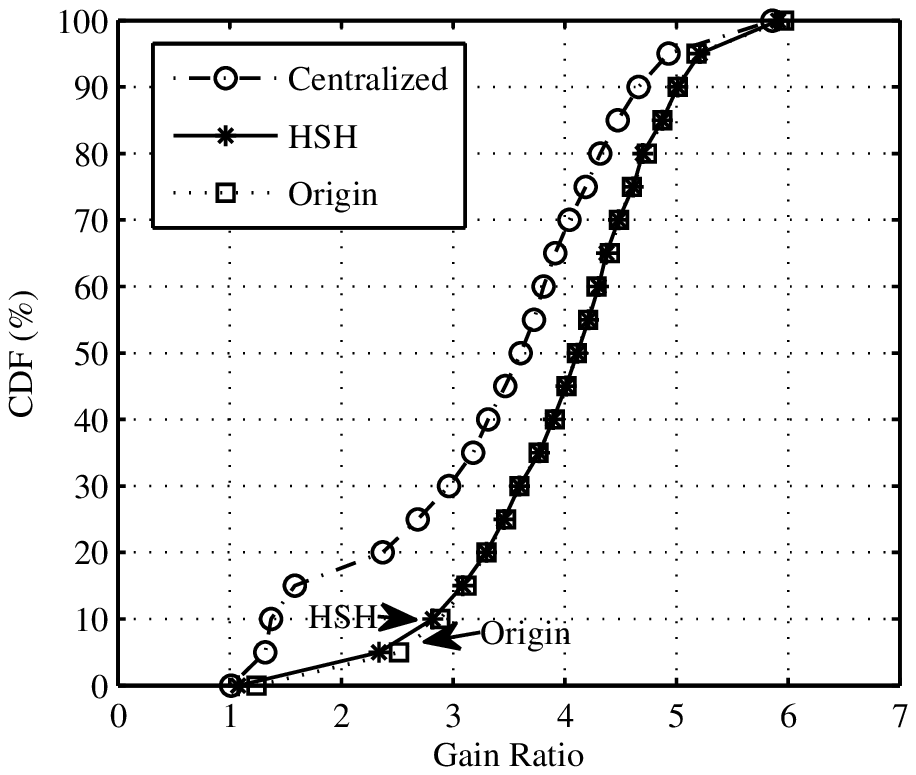,width=0.3\hsize}}

\caption[Clustering validation over Synthetic data set]{\label{fig:validation} Cumulative Distribution Function (CDF) of silhouette coefficients.}
\end{figure}

\subsection{Pairwise Network Distance Matrix Dataset}

Next, we compare the clustering quality of HSH with several optimized clustering methods on the DNS dataset, include the centralized NMF (denoted as centralized), the K-means clustering over the dimension-reduced vectors by the Singular Value Decomposition (SVD), and a K-means clustering over the network coordinates computed by vivaldi, one of the most popular methods \cite{Dabek04vivaldi:a}. We run Centralized and SVD methods with the complete RTT matrix. We set the same number of landmark nodes for HSH and vivaldi.

\subsubsection{Landmark Number}

We fix the number of clusters to three and change the number of landmarks from 20 to 40. Figure \ref{fig:VLS} and \ref{fig:VLG} shows the variations of the silhouette coefficients and those of the gain ratios of four methods, respectively. We can see that increasing the number of landmarks generally improves the clustering accuracy for HSH and vivaldi, especially for the set of poorly clustered nodes,  since both methods become more robust with increasing observations.  


 \begin{figure*}
\centering
 \subfigure[DNS1143]{%
    \label{fig:ex4-a}%
    \epsfig{file=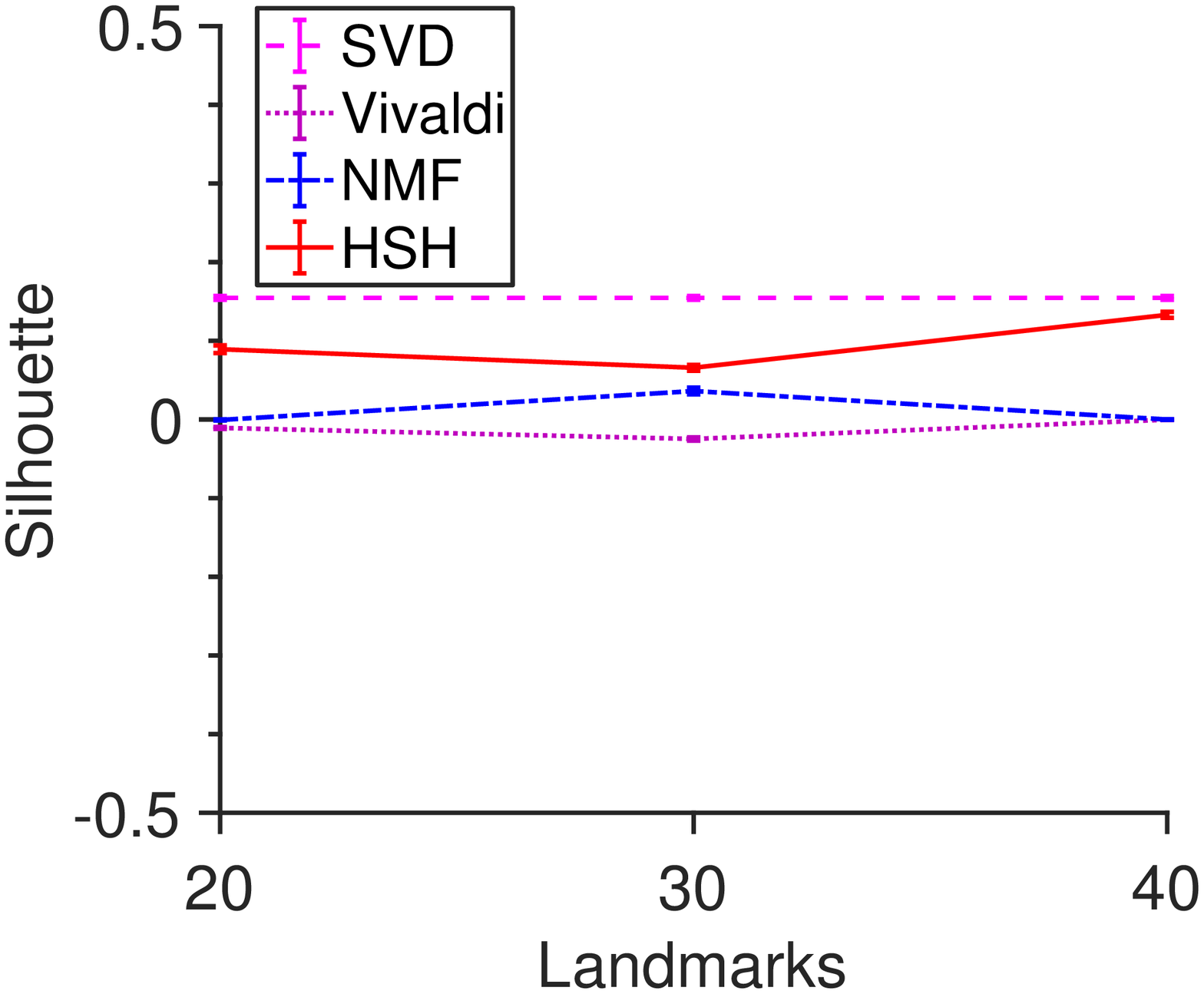, width=0.3\hsize}}
  \subfigure[DNS3997]{%
    \label{fig:ex4-b}%
   \epsfig{file=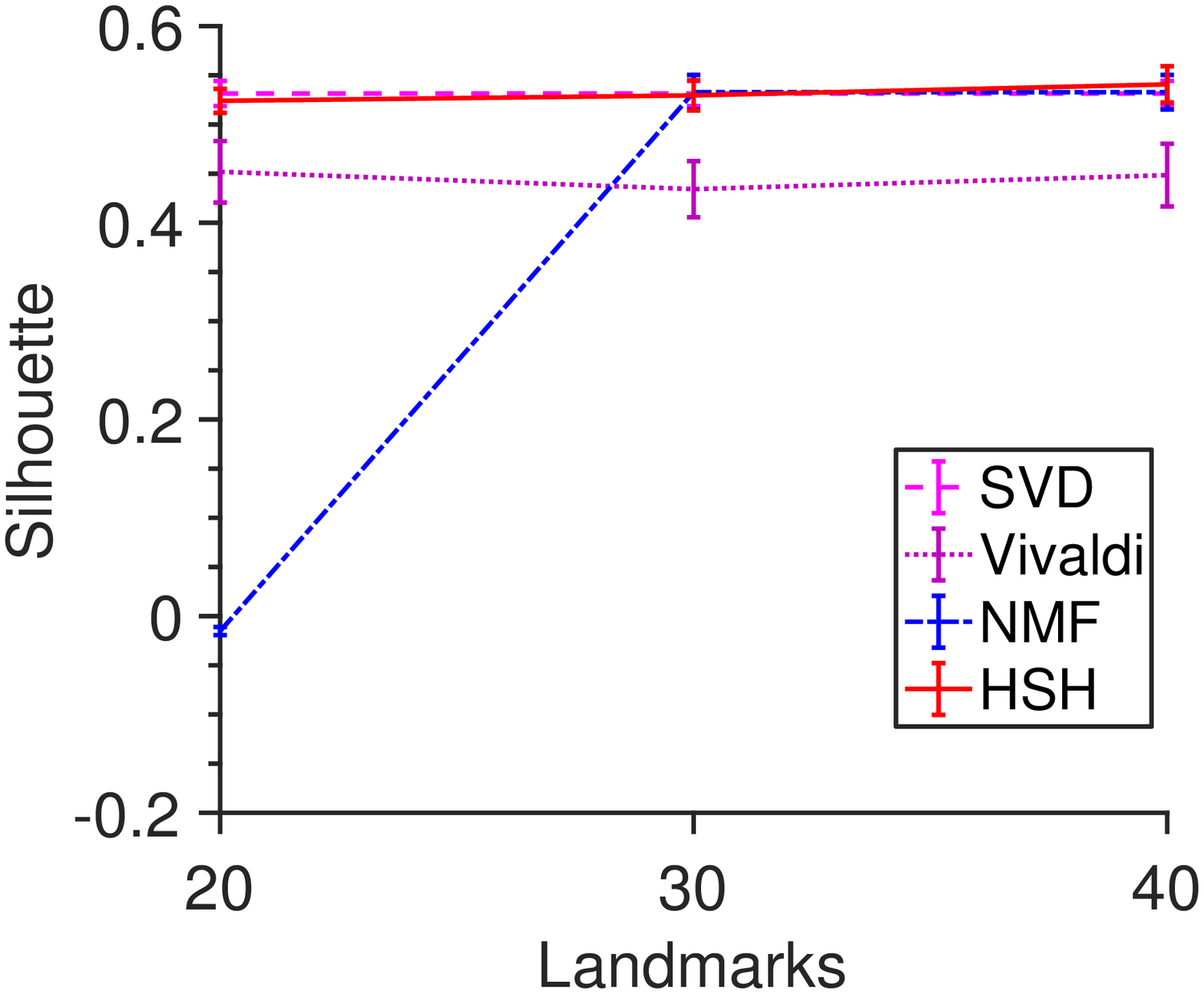, width=0.3\hsize}}
\caption{\label{fig:VLS}  Medians of the silhouette coefficients as well as the confidence intervals with different numbers of landmarks.} 
\end{figure*}

 \begin{figure*}
\centering
 \subfigure[DNS1143]{%
    \label{fig:ex4-a}%
    \epsfig{file=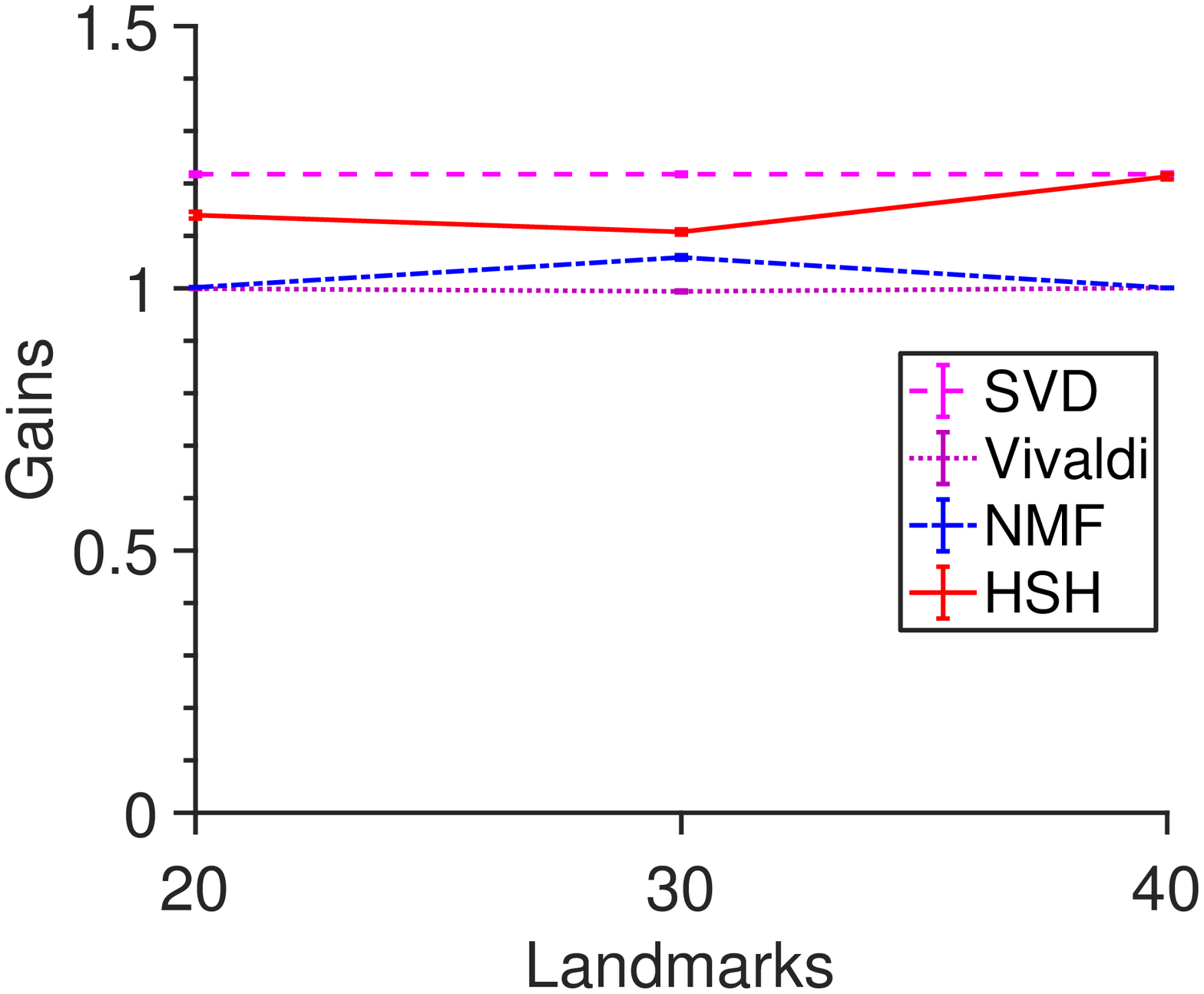, width=0.3\hsize}}
  \subfigure[DNS3997]{%
    \label{fig:ex4-b}%
   \epsfig{file=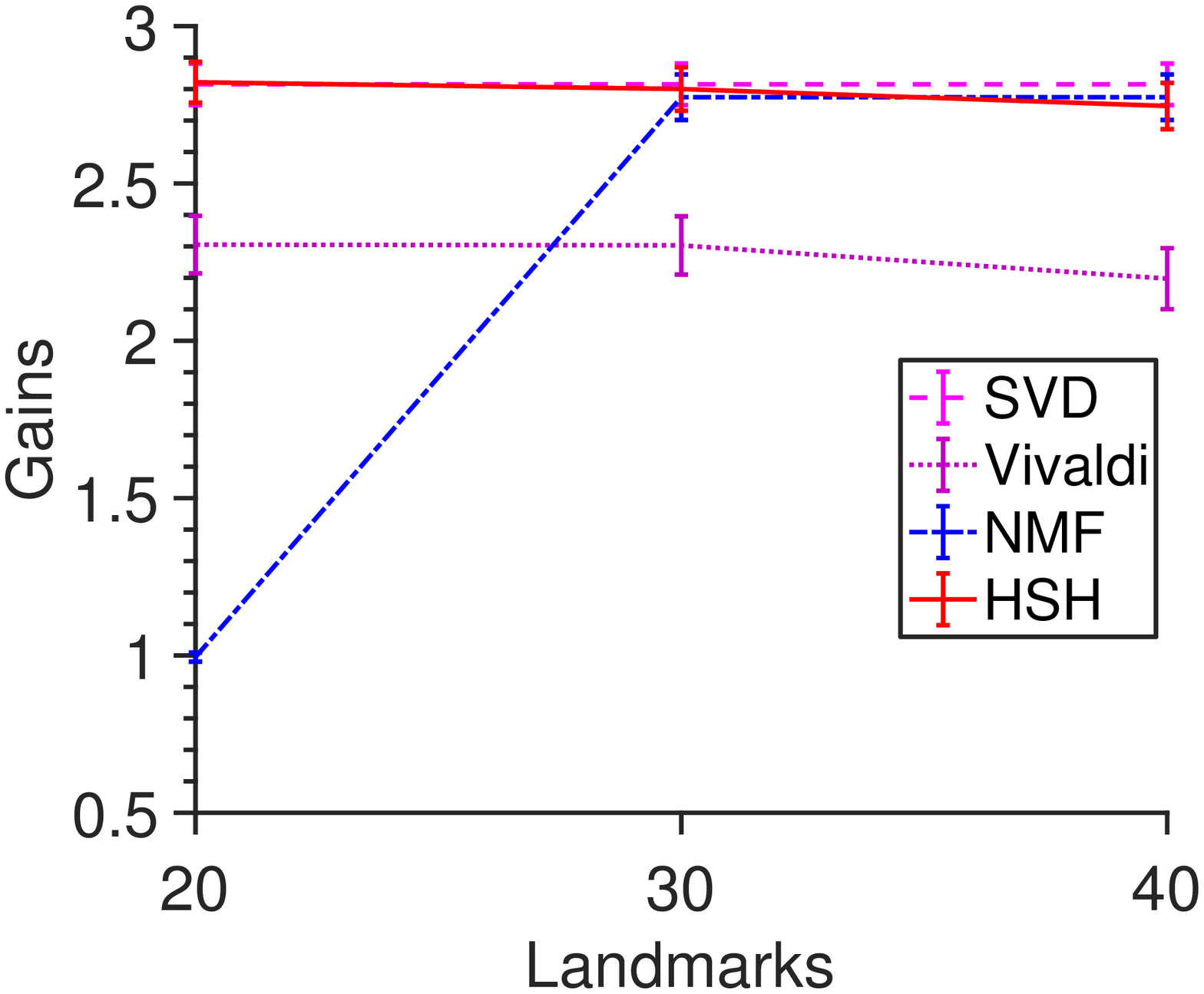, width=0.3\hsize}}
\caption{\label{fig:VLG}  Medians of the gain ratios  as well as the confidence intervals with different numbers of landmarks.} 
\end{figure*}

\co{
 \begin{figure*}
\centering
  \subfigure[$L$ = 20]{%
    \label{fig:ex4-b}%
   \epsfig{file=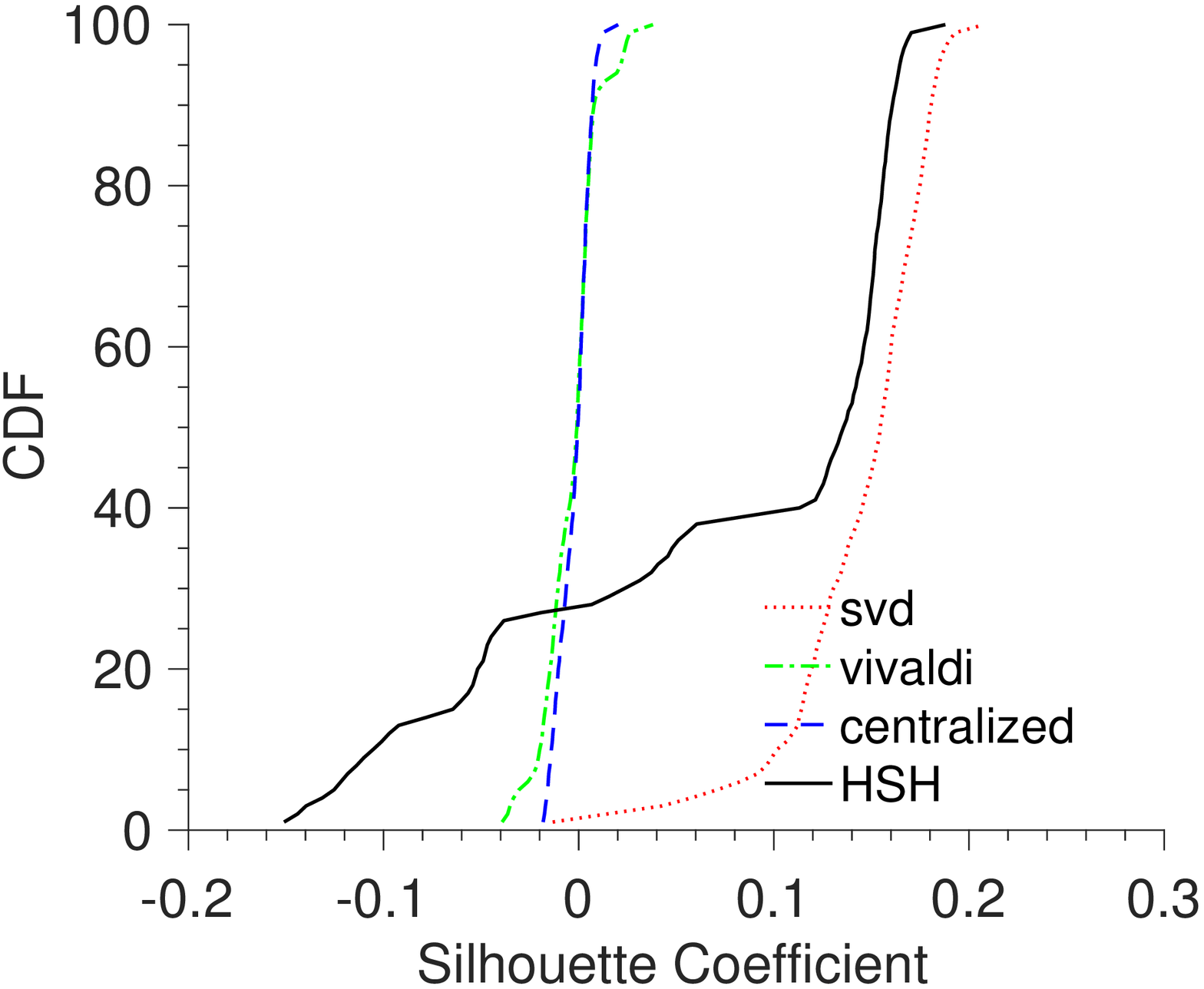, width=0.28\hsize}}
 \subfigure[$L$ = 30]{%
    \label{fig:ex4-a}%
    \epsfig{file=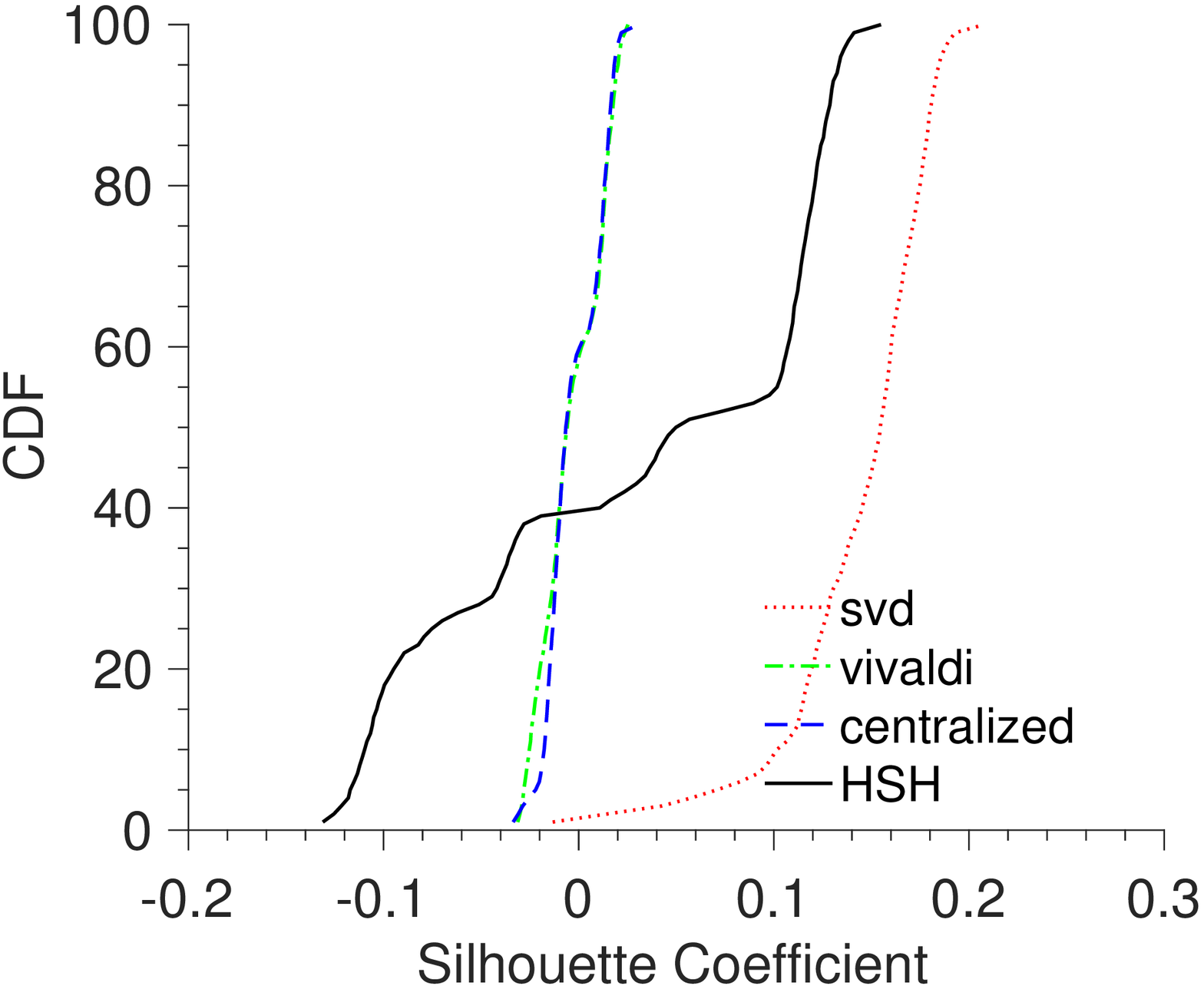, width=0.28\hsize}}
  \subfigure[$L$ = 35]{%
    \label{fig:ex4-b}%
   \epsfig{file=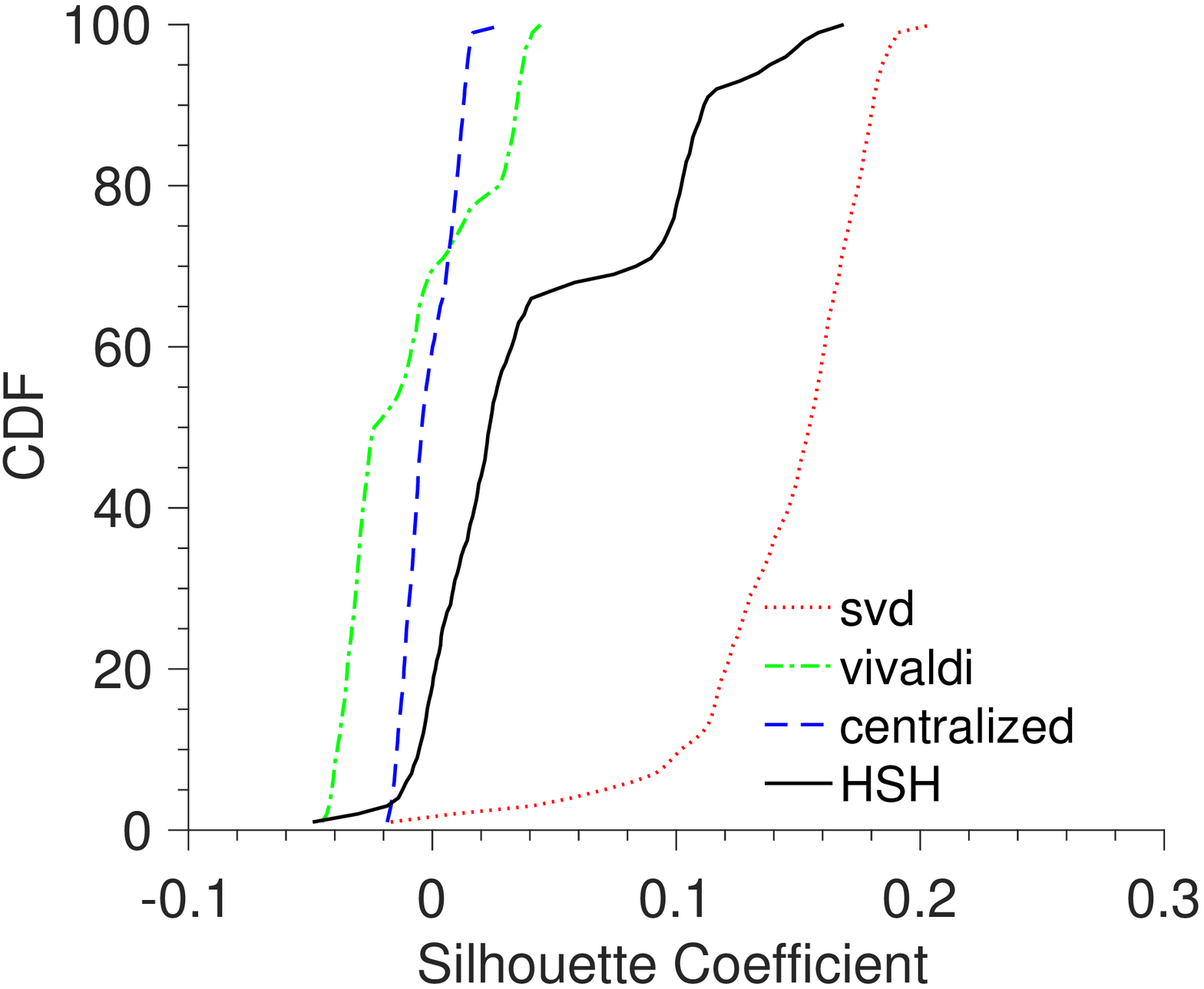, width=0.28\hsize}}
\caption{\label{fig:VLS}  Silhouette coefficients with different numbers of landmarks on the DNSDat data set.}
\end{figure*}

 \begin{figure*}
\centering
  \subfigure[$L$ = 20]{%
    \label{fig:ex4-b}%
   \epsfig{file=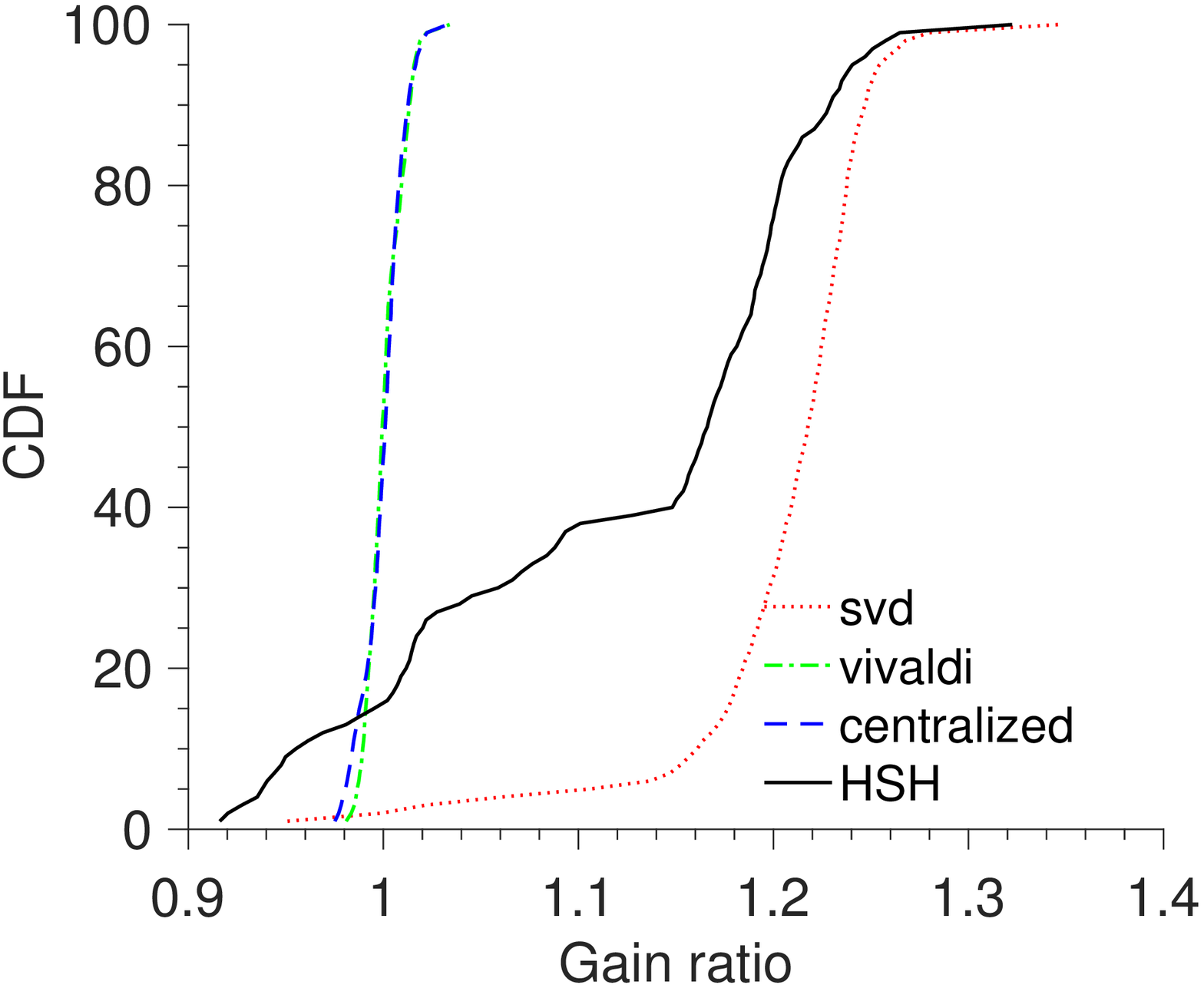, width=0.28\hsize}}
 \subfigure[$L$ = 30]{%
    \label{fig:ex4-a}%
    \epsfig{file=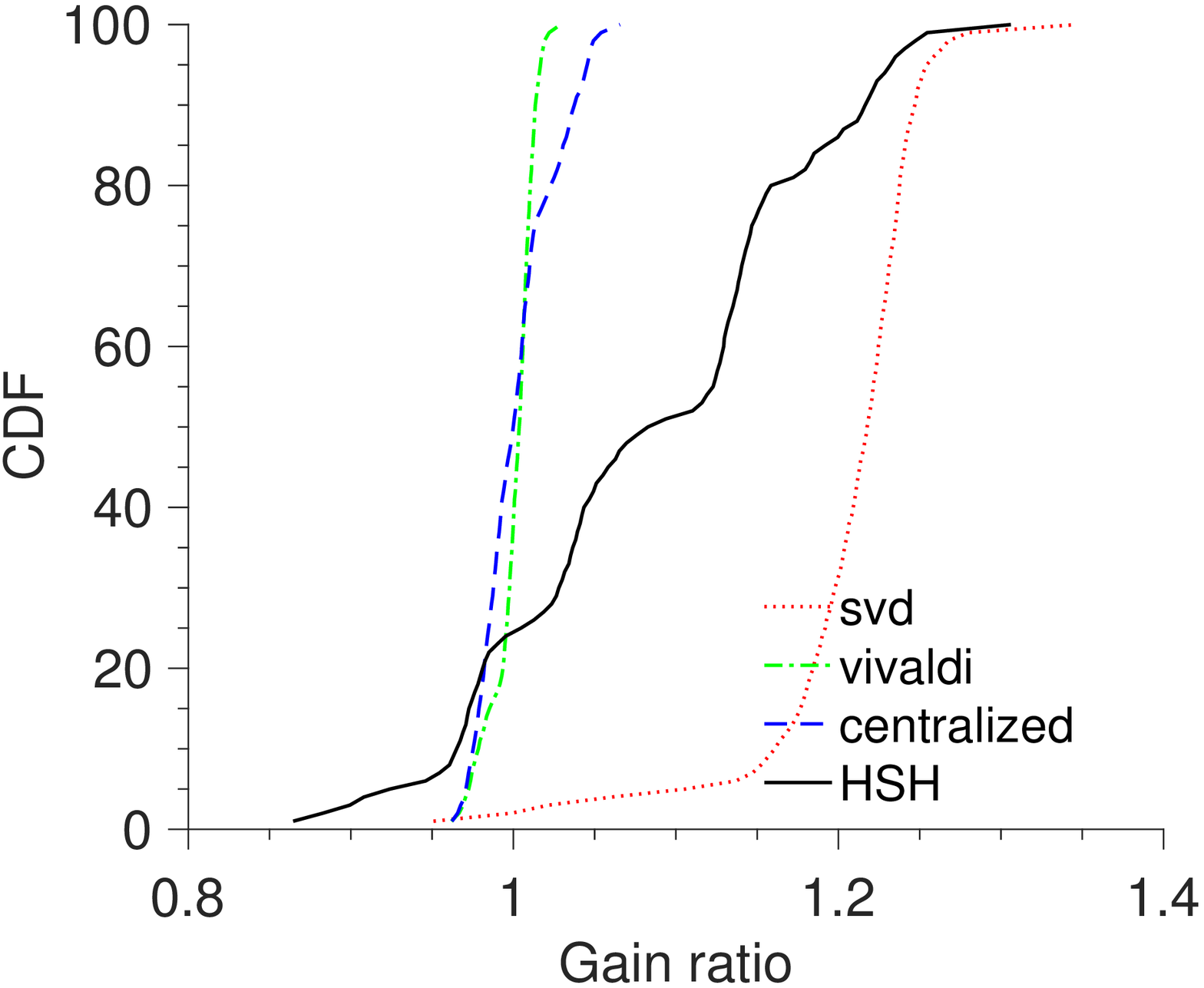, width=0.28\hsize}}
  \subfigure[$L$ = 35]{%
    \label{fig:ex4-b}%
   \epsfig{file=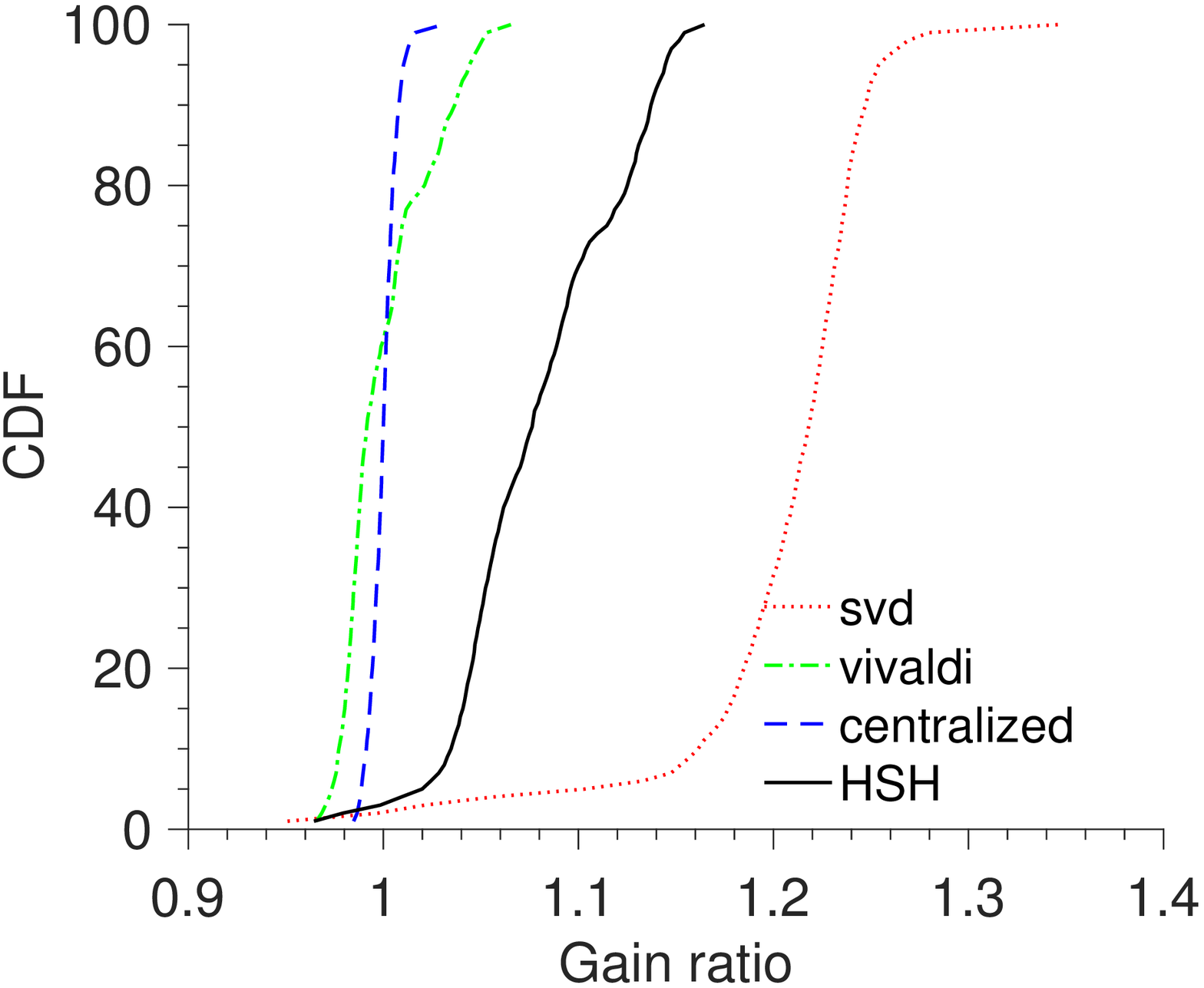, width=0.28\hsize}} 
\caption{\label{fig:VLG}  Gain ratios with different numbers of landmarks on the DNSDat data set.} 
\end{figure*}
}

\subsubsection{Cluster Number}

 \begin{figure*}
\centering
 \subfigure[DNS1143]{%
    \label{fig:ex4-a}%
    \epsfig{file=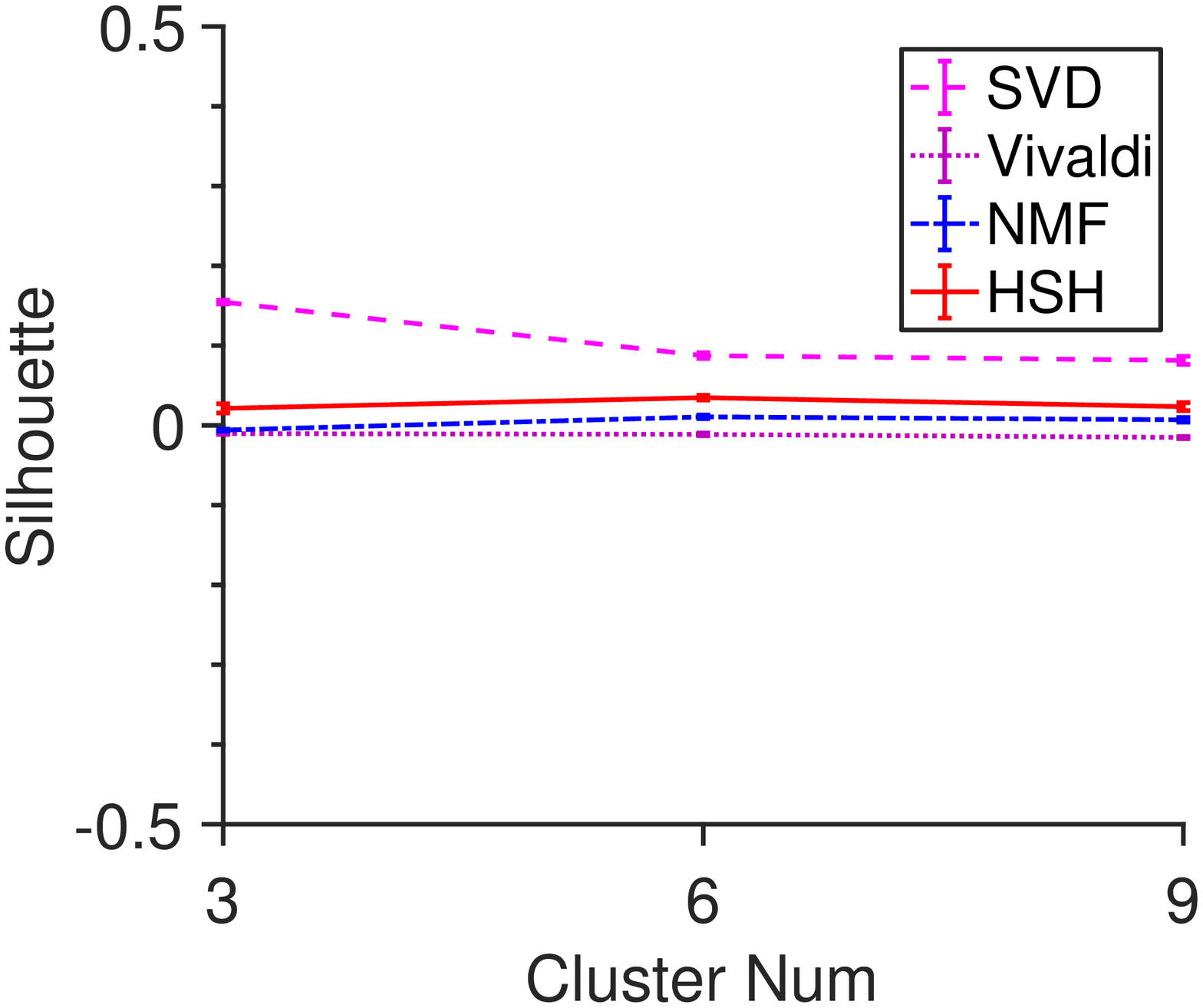, width=0.3\hsize}}
  \subfigure[DNS3997]{%
    \label{fig:ex4-b}%
   \epsfig{file=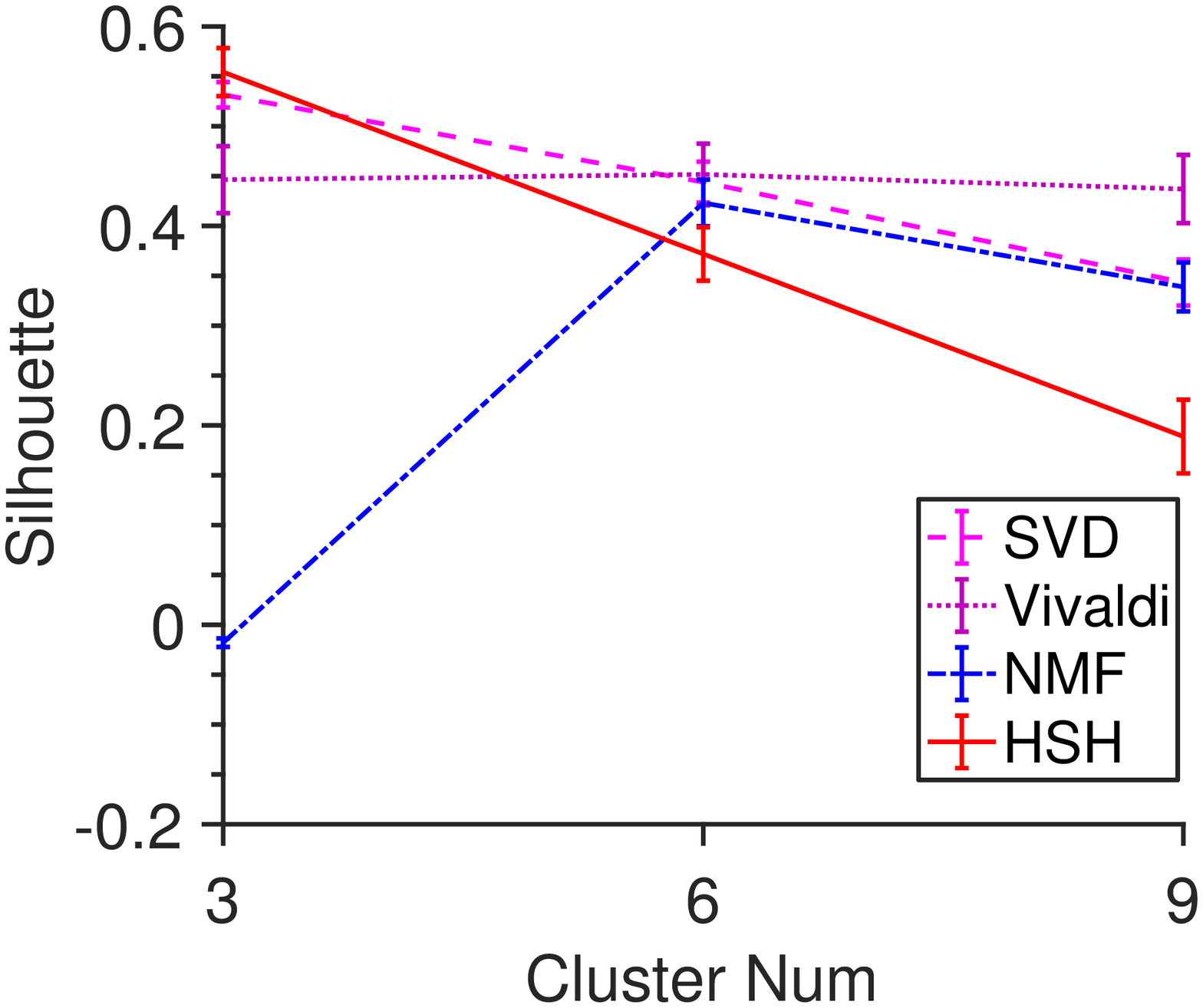, width=0.3\hsize}}
\caption{\label{fig:VCS}  Medians of the silhouette coefficients  as well as the confidence intervals as a function of the number of clusters.} 
\end{figure*}

 \begin{figure*}
\centering
 \subfigure[DNS1143]{%
    \label{fig:ex4-a}%
    \epsfig{file=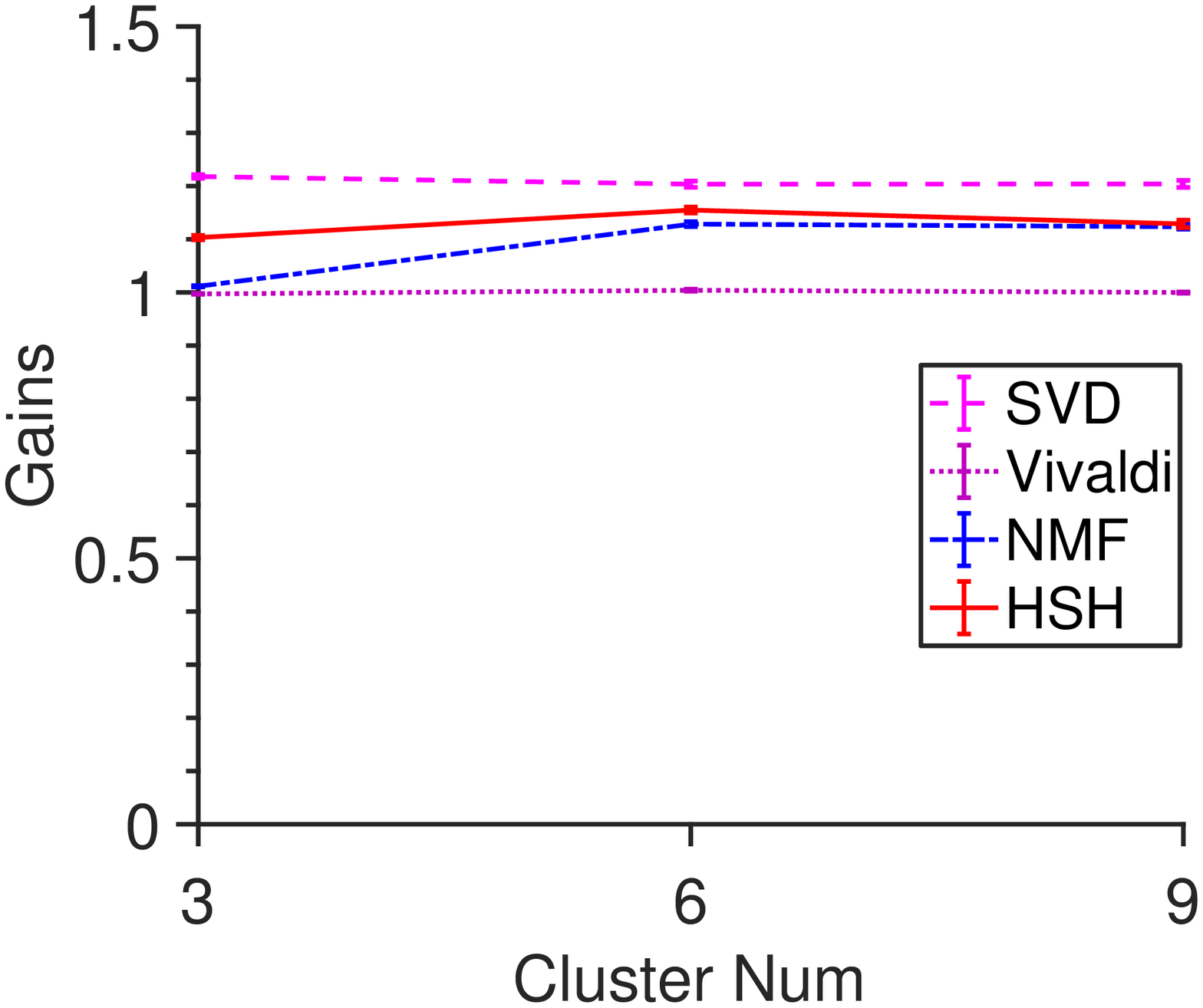, width=0.3\hsize}}
  \subfigure[DNS3997]{%
    \label{fig:ex4-b}%
   \epsfig{file=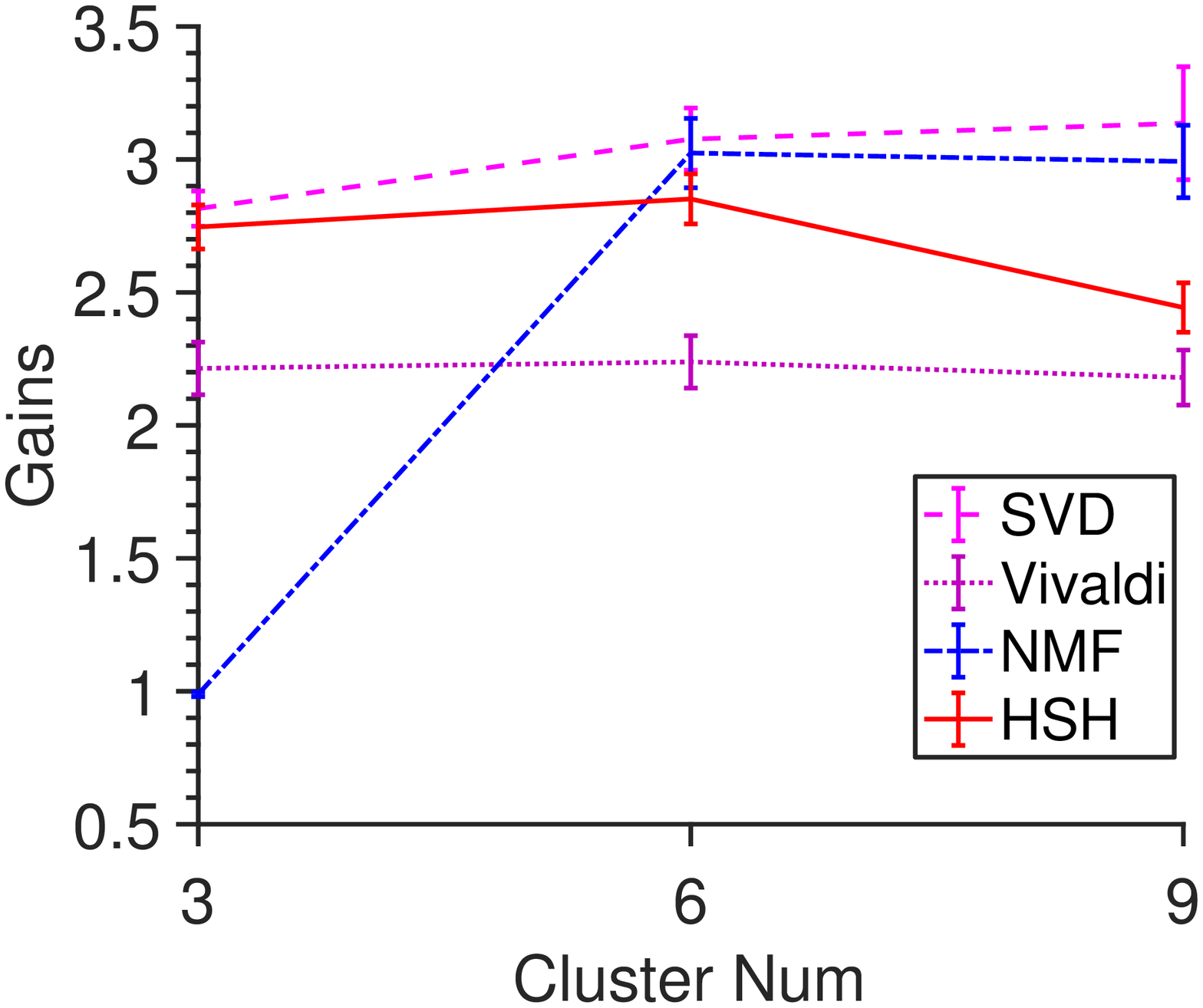, width=0.3\hsize}}
\caption{\label{fig:VCG}  Medians of the gain ratios as well as the confidence intervals as a function of the number of clusters.} 
\end{figure*}

Next, we fix the number of landmarks to 30 and test the clustering sensitivity to the number of clusters. The more sensitive to the numbers of clusters, the better the confidence on the numbers of clusters. Figure \ref{fig:VCS}  and \ref{fig:VCG}  plot the functions of  the silhouette coefficients and those of the gain ratios for four methods including SVD, vivaldi, centralized and HSH. We can see that the SVD and HSH are most sensitive to the variations of the numbers of clusters, while vivaldi and NMF are less sensitive. 


\co{
Figure \ref{fig:VCS}  and \ref{fig:VCG}  plot the CDFs of the silhouette coefficients and gain ratios for four methods including SVD, vivaldi, centralized and HSH when the numbers of clusters are 3, 6, 9 and 12, respectively. We can see that the SVD has the highest accuracy, as it is known to obtain the best low-rank representation for the fully observed matrix. HSH is better than the centralized NMF, which is consistent with the Synthetic data set. vivaldi is less accurate than the centralized method, as the network coordinates are optimized without the theoretical guarantees of the local-optimum convergence. 

 \begin{figure*}
\centering
 \subfigure[$K$=3]{%
    \label{fig:ex4-a}%
    \epsfig{file=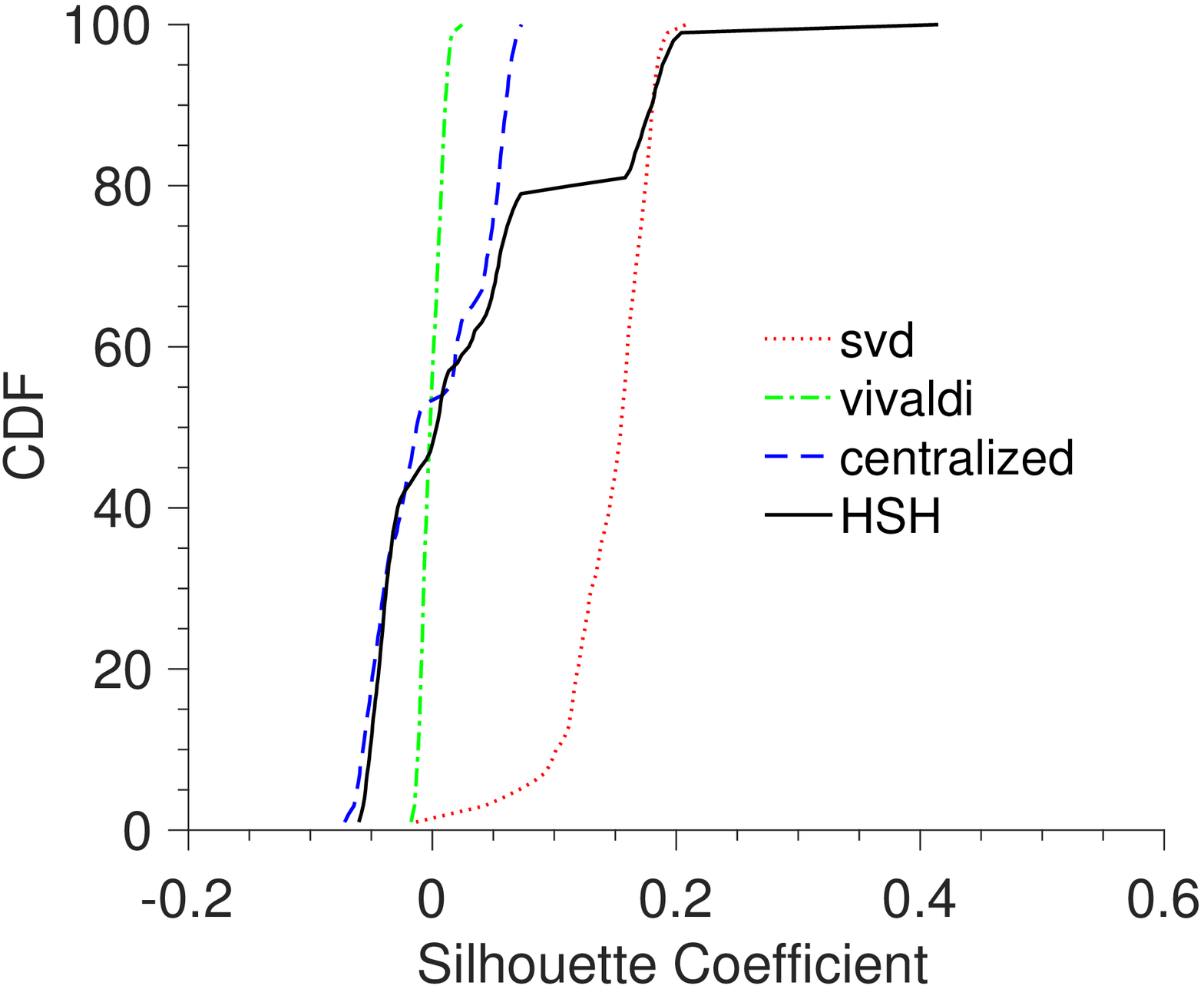, width=0.3\hsize}}
  \subfigure[$K$=6]{%
    \label{fig:ex4-b}%
   \epsfig{file=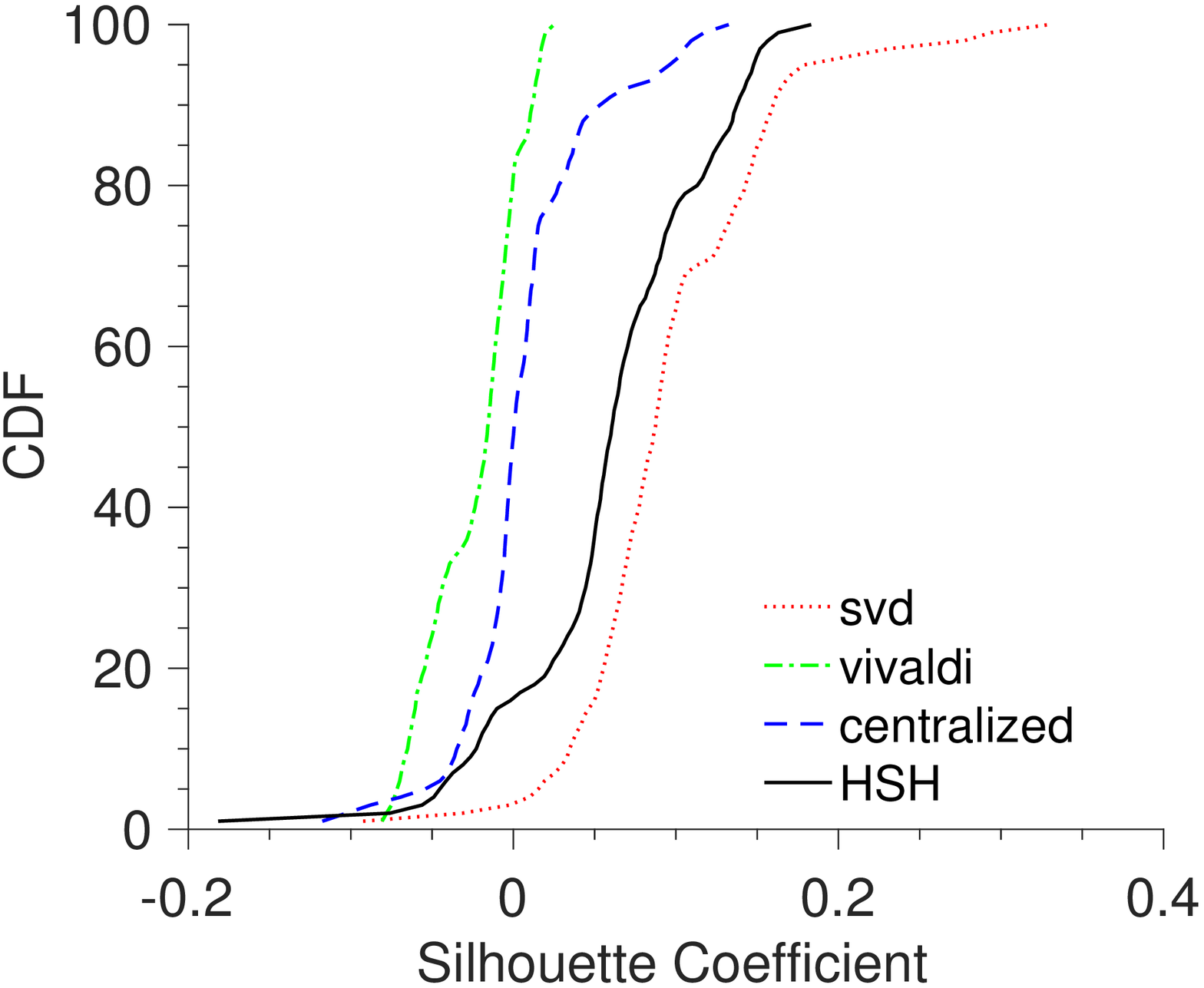, width=0.3\hsize}}
 \subfigure[$K$=9]{%
    \label{fig:ex4-a}%
    \epsfig{file=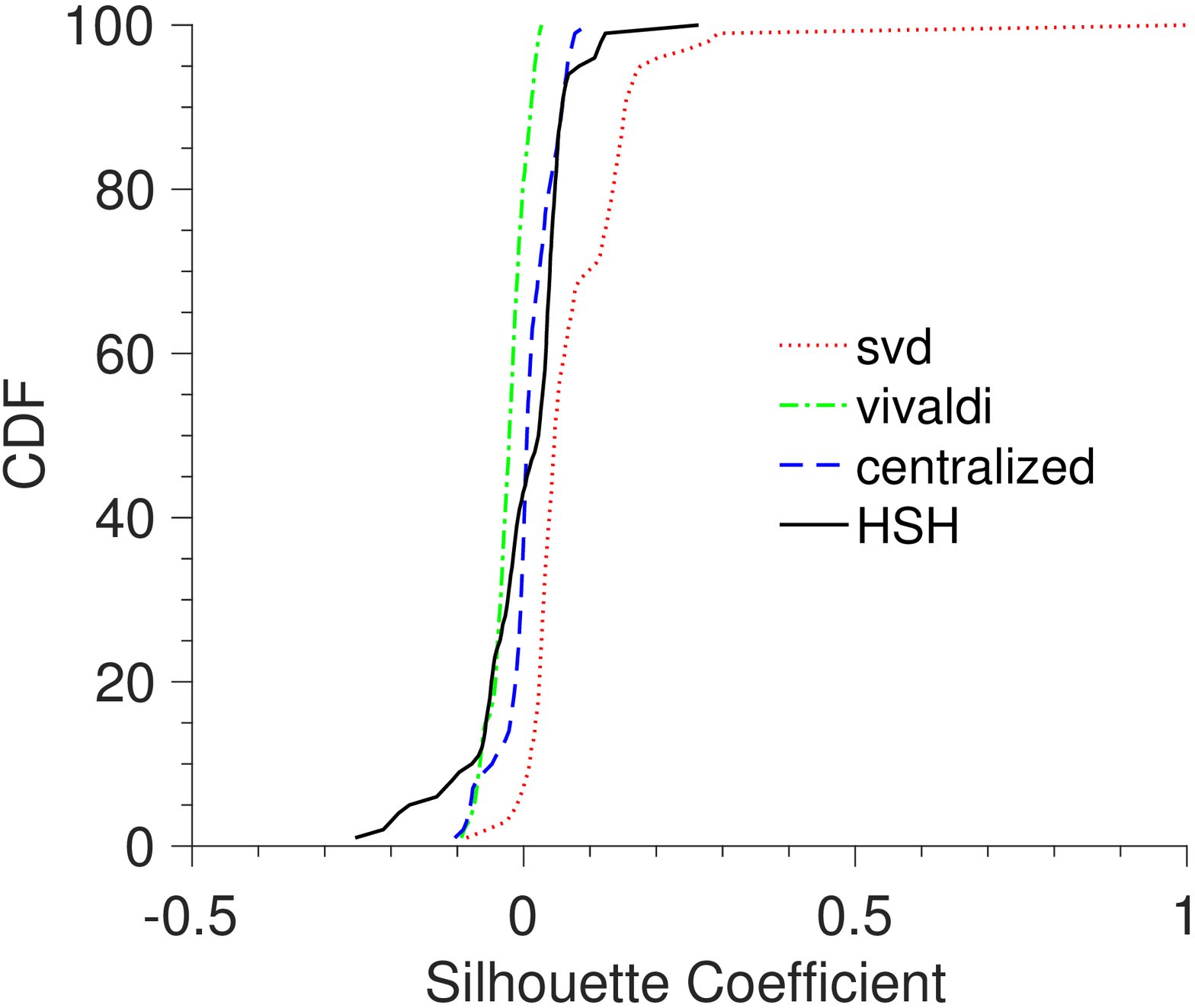, width=0.3\hsize}}
  \subfigure[$K$=12]{%
    \label{fig:ex4-b}%
   \epsfig{file=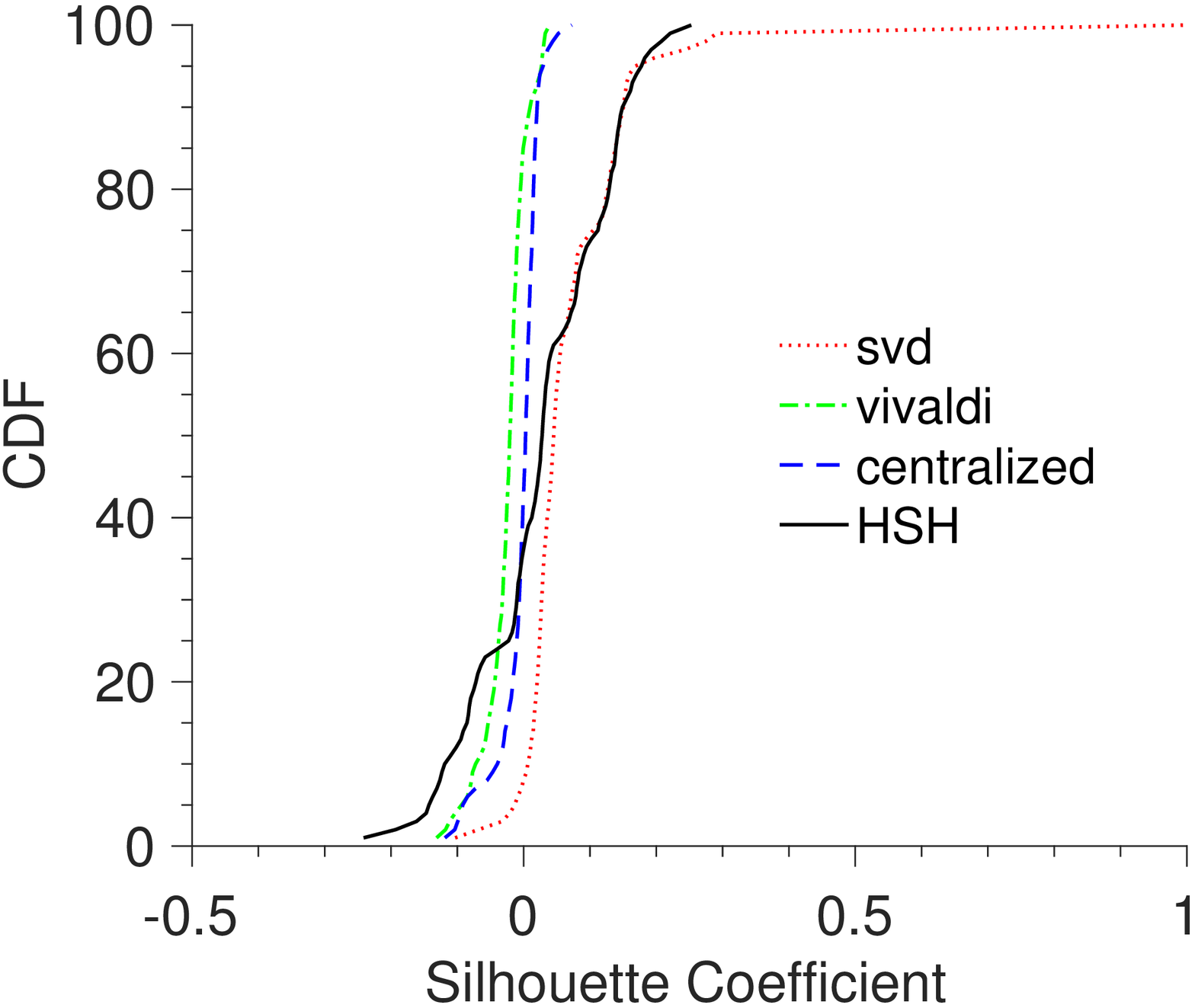, width=0.3\hsize}}
\caption{\label{fig:VCS}  Medians of the silhouette coefficients  as well as the confidence intervals as a function of the number of clusters.} 
\end{figure*}

\begin{figure*}
\centering
 \subfigure[$K$=3]{%
    \label{fig:ex4-a}%
    \epsfig{file=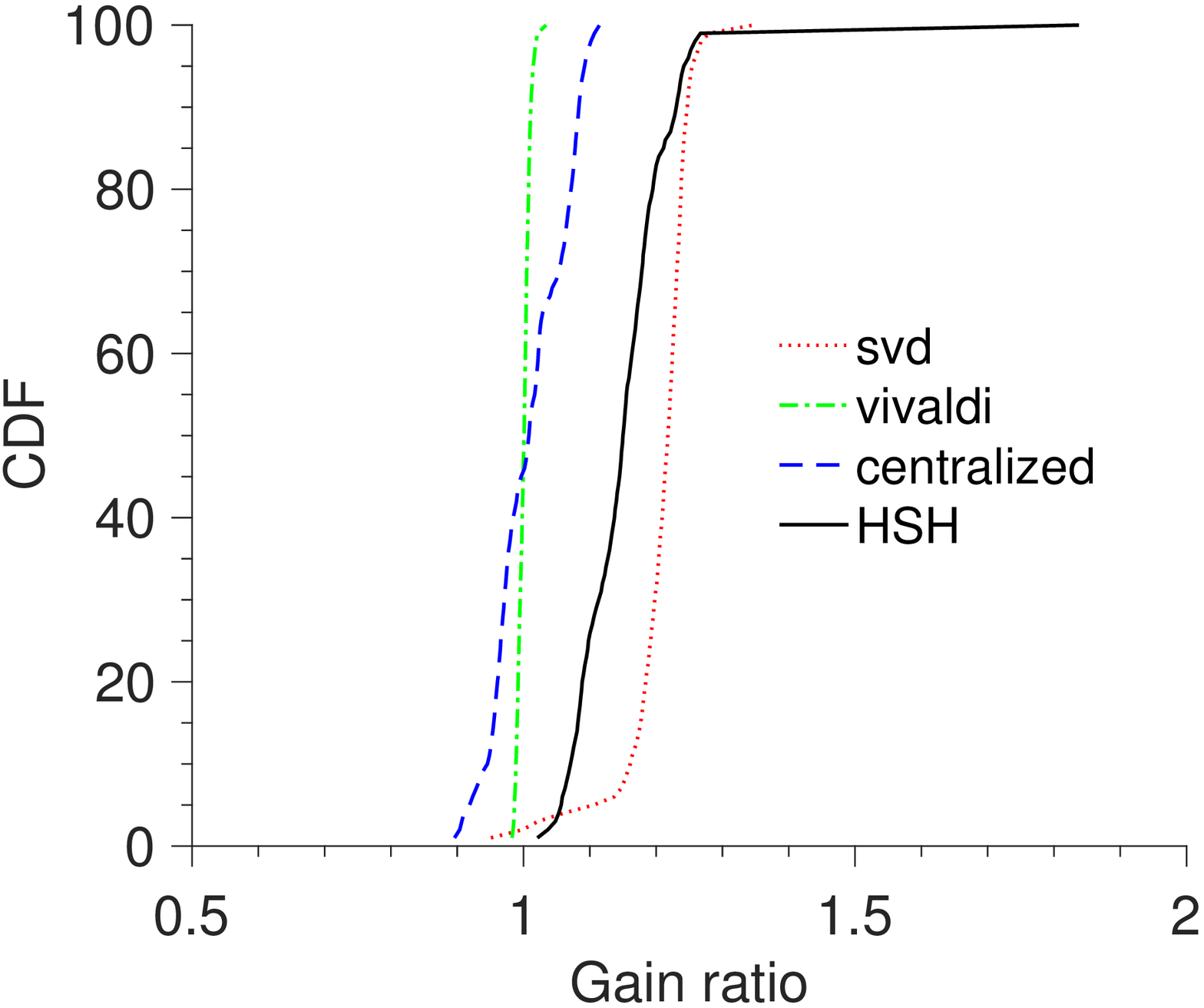, width=0.3\hsize}}
  \subfigure[$K$=6]{%
    \label{fig:ex4-b}%
   \epsfig{file=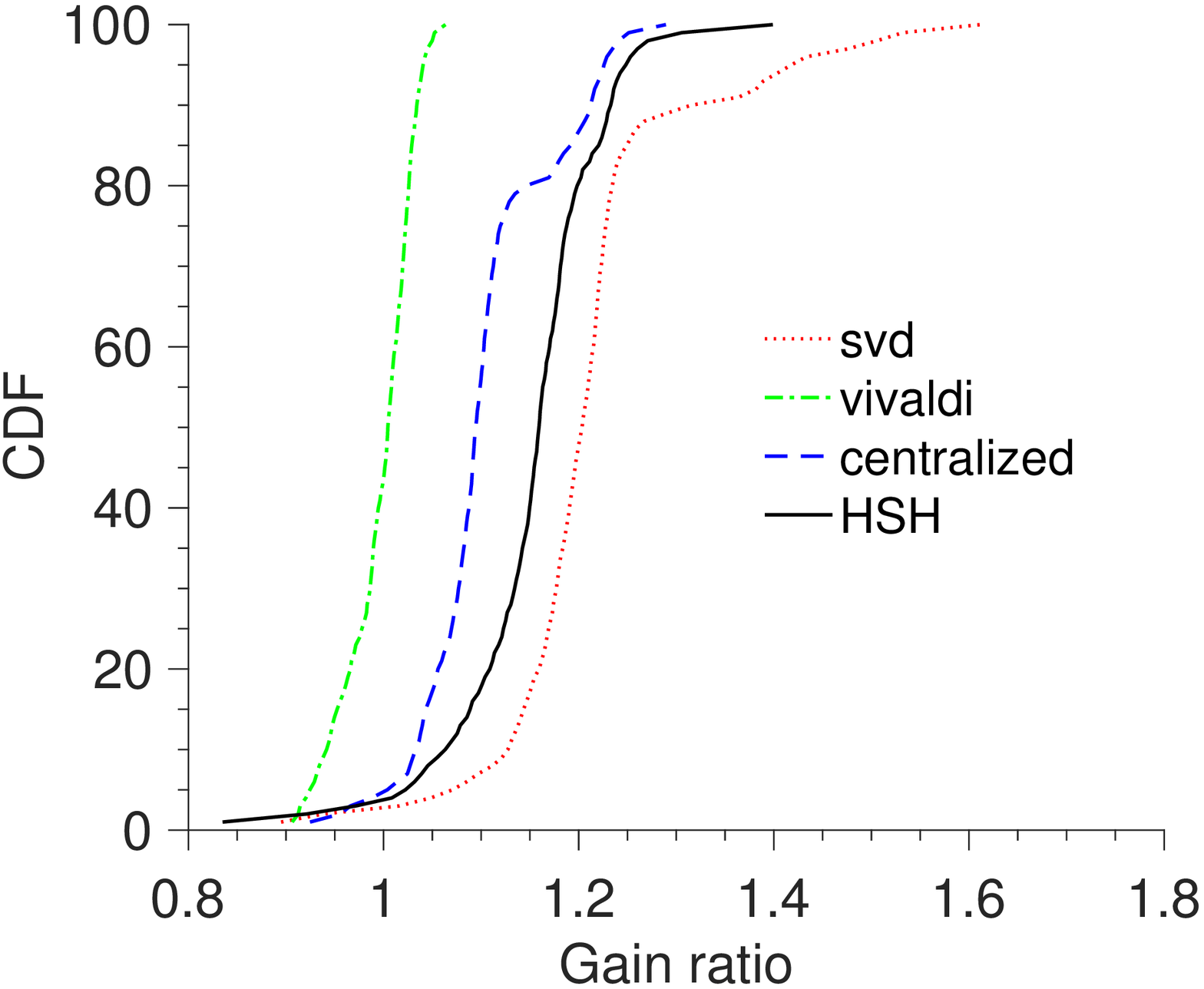, width=0.3\hsize}}
 \subfigure[$K$=9]{%
    \label{fig:ex4-a}%
    \epsfig{file=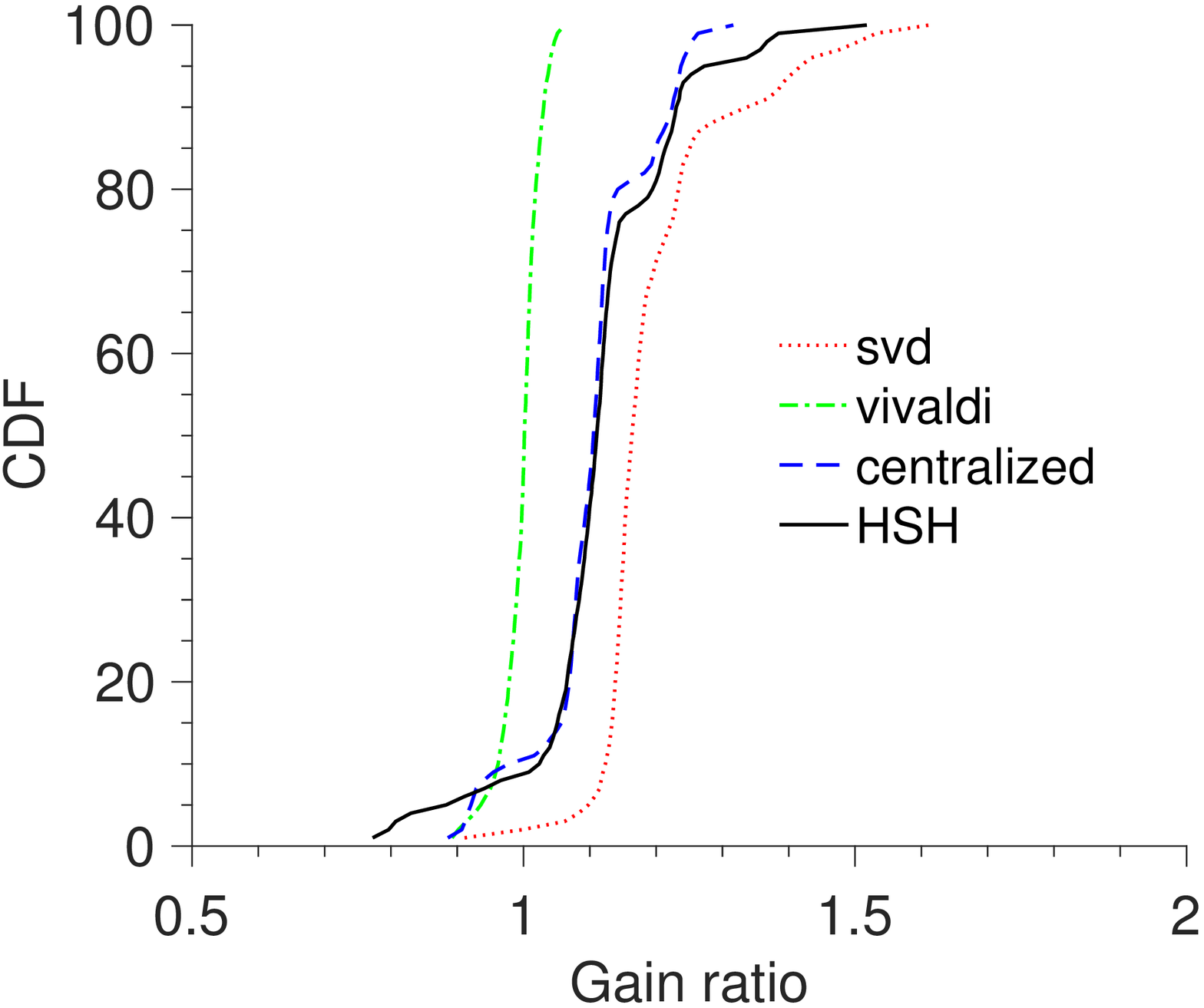, width=0.3\hsize}}
  \subfigure[$K$=12]{%
    \label{fig:ex4-b}%
   \epsfig{file=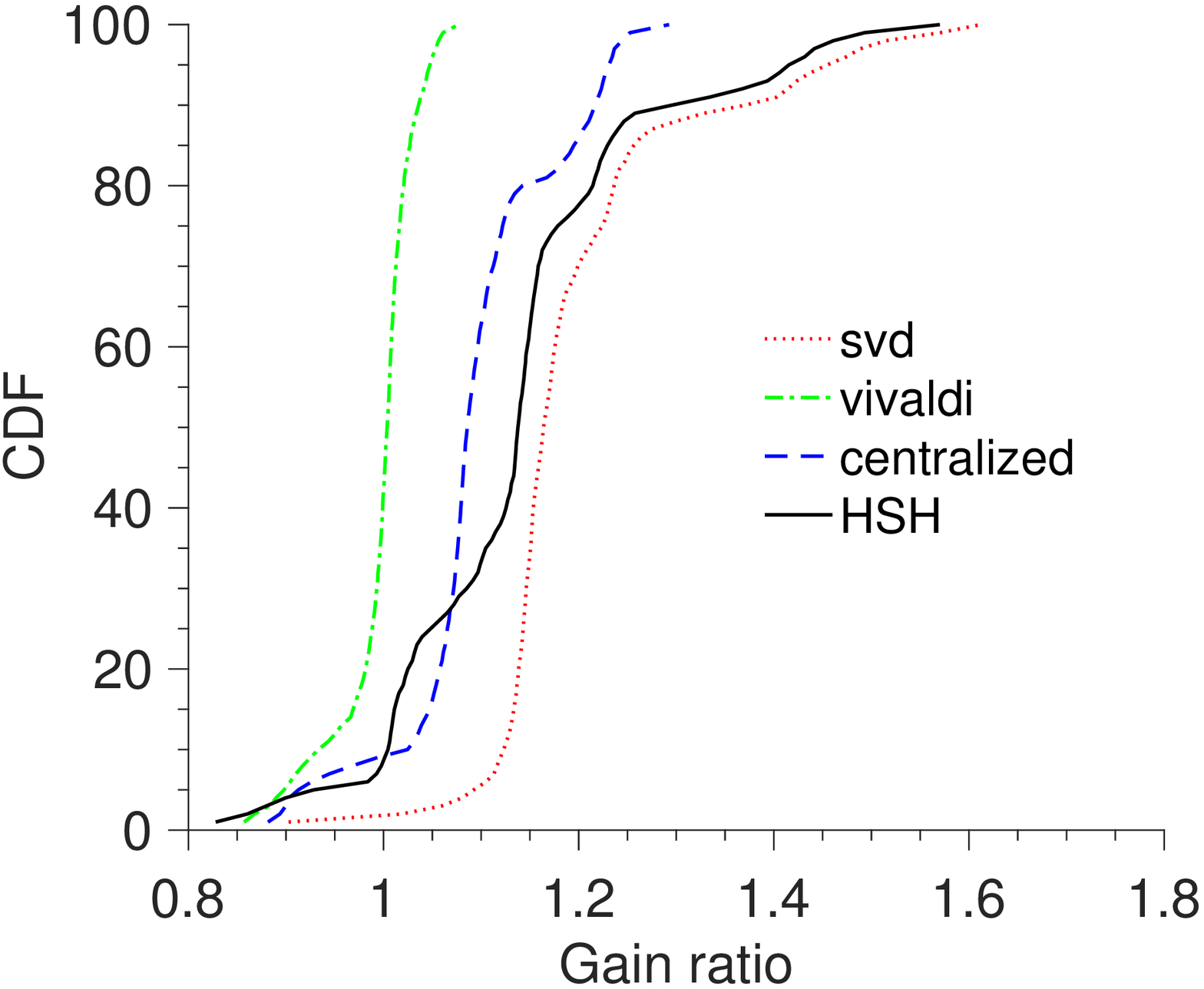, width=0.3\hsize}}
\caption{\label{fig:VCG}  Medians of the gain ratios as well as the confidence intervals as a function of the number of clusters.} 
\end{figure*}
}

\subsection{Dynamic Data Set}

Further, we test the dynamics of the clustering quality over the cloud data set, which contains 688 99$\times$99 pairwise RTT matrices between 99 wide-area network devices. We calculate the median of the silhouette coefficients and the gain ratios for each RTT matrix. We fix the number of clusters to three and the number of landmarks to 30 based on the above analysis.  
Figure \ref{fig:Dynamic} shows that both HSH and NMF have close clustering results and both 
have stable and high clustering quality.  HSH can find high
quality clustering structure stably. 

\begin{figure}[!t]
\centering
 \subfigure[Silhouette coefficient]{%
    \label{fig:ex3-a}%
    \epsfig{file=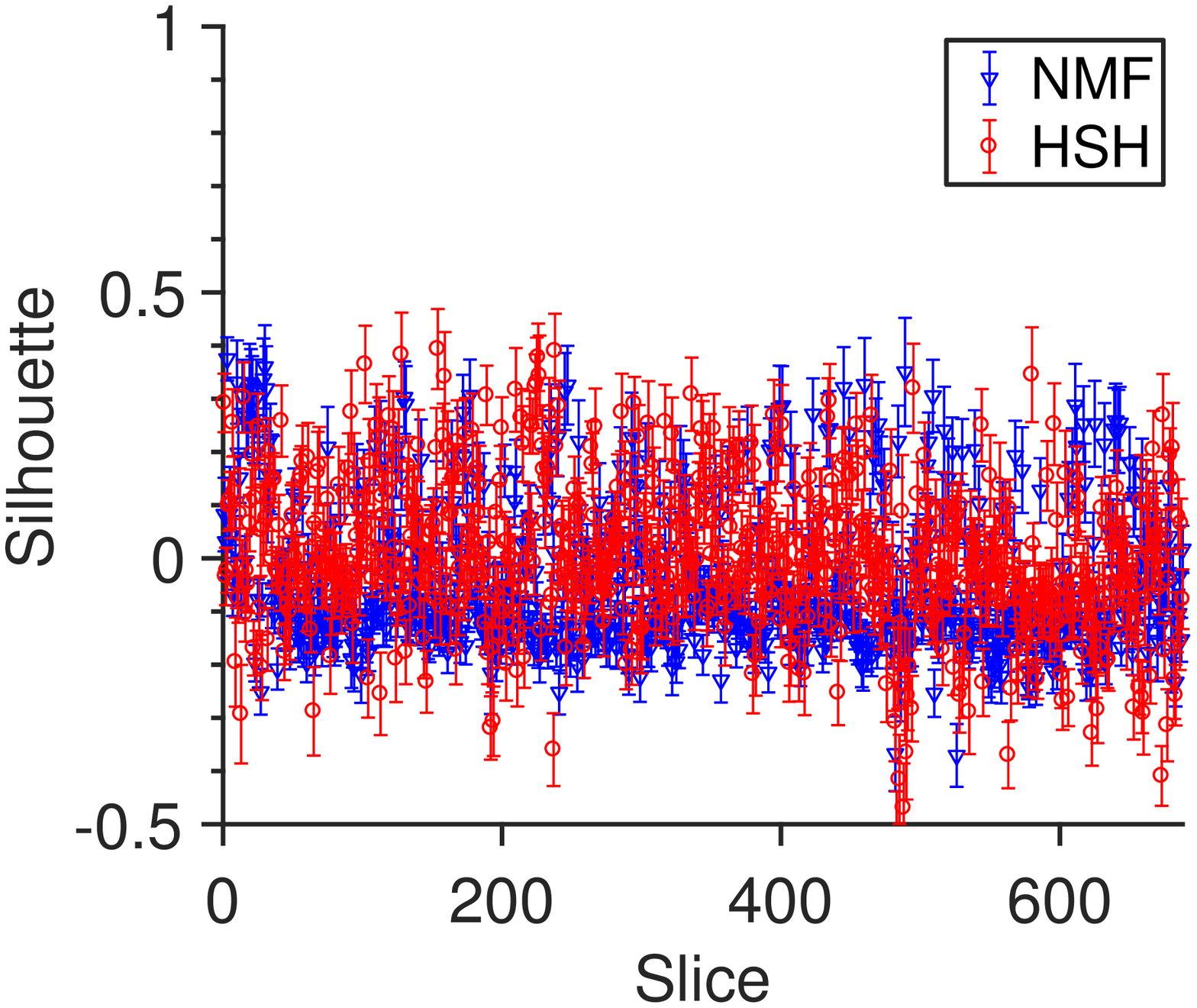,width=0.45\hsize}}
  \subfigure[Gain ratio]{%
    \label{fig:ex3-b}%
    \epsfig{file=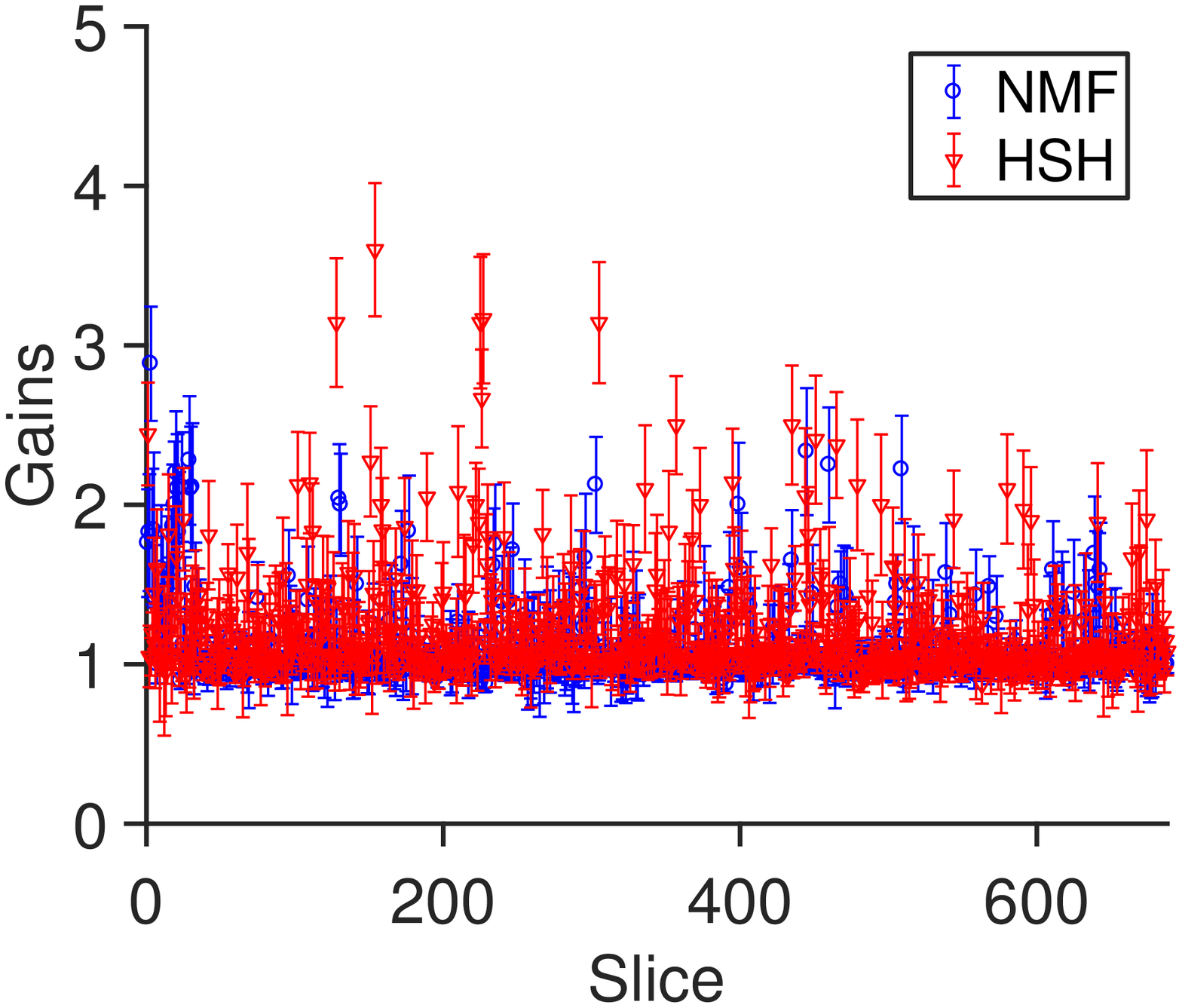,width=0.43\hsize}}
\caption[Clustering validation over the dynamic data set.]{\label{fig:Dynamic} Clustering validation over the dynamic data set.}
\end{figure}

\section{Discussions}
\label{ConcludeSec}

We have presented HSH, a new Nystr$\ddot{\textrm{o}}$m approximation approach for the kernel K-means clustering  framework on the complex-space kernel matrix.   We have validated the effectiveness of HSH over synthetic data sets,
and verified the clustering quality over  real-world DNS data sets.
The results show that HSH is scalable, accurate and robust.  

\newpage

\section{Broader Impact}

This paper is motivated to determine the clustering structure of global networked systems such as the DNS servers without direct control. Such global networked systems are the building blocks of the digital information society. Clustering networked systems based on network distances provides a compact summary representation. Understanding the structure of these systems could foster new technologies on service innovation to the global digital society.

\bibliographystyle{abbrv}
\bibliography{neurips_2020}

\appendix

\co{\section{Fundings}

This work was sponsored in part by National Key Research and Development Program of China under Grant No. 2018YFB0204300, and the National Natural Science Foundation of China (NSFC) under Grant No. 61972409, 61602500, 61402509, 61772541, 61872376.}

\end{document}